\LetLtxMacro{\oldtextsc}{\textsc}
\renewcommand{\textsc}[1]{\oldtextsc{\scalefont{1.10}#1}}
\definecolor{shadecolor}{gray}{0.9}
\crefname{section}{§}{§§}
\Crefname{section}{§}{§§}
\lstdefinestyle{mystyle}{
    commentstyle=\color{OliveGreen},
    numberstyle=\tiny\color{black!60},
    stringstyle=\color{BrickRed},
    basicstyle=\ttfamily\scriptsize,
    breakatwhitespace=false,
    breaklines=true,
    captionpos=b,
    keepspaces=true,
    numbers=none,
    numbersep=5pt,
    showspaces=false,
    showstringspaces=false,
    showtabs=false,
    tabsize=2
}
\newcommand{\q}[1]{\red{{\sf Q | #1}}}
\newcommandx{\unsure}[2][1=]{\todo[linecolor=red,backgroundcolor=red!25,bordercolor=red,#1]{#2}}
\newcommandx{\change}[2][1=]{\todo[linecolor=blue,backgroundcolor=blue!25,bordercolor=blue,#1]{#2}}
\newcommandx{\info}[2][1=]{\todo[linecolor=OliveGreen,backgroundcolor=OliveGreen!25,bordercolor=OliveGreen,#1]{#2}}
\newcommandx{\improvement}[2][1=]{\todo[linecolor=Plum,backgroundcolor=Plum!25,bordercolor=Plum,#1]{#2}}
\newcommandx{\thiswillnotshow}[2][1=]{\todo[disable,#1]{#2}}
\newtheorem{theorem}{Theorem}[section]
\newtheorem{proposition}[theorem]{Proposition}
\newtheorem{lemma}[theorem]{Lemma}
\newtheorem{lemma*}{Lemma}
\newtheorem{corollary}[theorem]{Corollary}
\newtheorem{remark}[theorem]{Remark}
\newtheorem{assumption}{\textbf{Assumption}}
\def\K{\mathbb{K}}
\def\N{\mathbb{N}}
\def\D{\mathbf{D}}
\def\F{\mathbf{F}}
\def\K{\mathbf{K}}
\def\P{\mathbf{P}}
\def\U{\mathbf{U}}
\def\X{\mathbf{X}}
\def\Y{\mathbf{Y}}
\def\a{\mathbf{a}}
\def\b{\mathbf{b}}
\def\f{\mathbf{f}}
\def\g{\mathbf{g}}
\def\p{\mathbf{p}}
\def\q{\mathbf{q}}
\def\u{\mathbf{u}}
\def\v{\mathbf{v}}
\def\x{\mathbf{x}}
\def\y{\mathbf{y}}
\def\0{\mathbf{0}}
\def\cP{\mathcal{P}}
\def\cX{\mathcal{X}}
\def\cY{\mathcal{Y}}
\DeclarePairedDelimiterX{\inner}[2]{\langle}{\rangle}{#1, #2} %
\DeclareMathOperator*{\Exp}{\mathbb{E}}
\DeclareMathOperator{\iid}{\stackrel{\mathclap{\tiny\mbox{i.i.d.}}}{\sim}}
\newcommand{\tr}{\mathrm{tr}}
\def\ones{\mathds{1}}
\renewcommand*\env@matrix[1][*\c@MaxMatrixCols c]{%
  \hskip -\arraycolsep
  \let\@ifnextchar\new@ifnextchar
  \array{#1}}
\newenvironment{itemize*}%
  {\begin{itemize}%
  \vspace{-0.5cm}
    \setlength{\itemsep}{0pt}%
    \setlength{\parskip}{0pt}}%
  {\end{itemize}}
\newenvironment{enumerate*}%
{\begin{enumerate}
    \vspace{-0.5cm}
    \setlength{\itemsep}{0pt}%
    \setlength{\parskip}{0pt}}%
  {\end{enumerate}}
    \title{Budget-Constrained Bounds for \\Mini-Batch Estimation of Optimal Transport}
    \author{
    David Alvarez-Melis \\
    Microsoft Research \\
	\texttt{daalvare@microsoft.com} \\
	\And
    Nicol\`o Fusi \\
    Microsoft Research \\
	\texttt{fusi@microsoft.com} \\
	\AND
    Lester Mackey \\
    Microsoft Research \\
	\texttt{lmackey@microsoft.com} \\
	\And
    Tal Wagner\thanks{Work done prior to joining Amazon.} \\
    Amazon \\
	\texttt{tal.wagner@gmail.com} \\	
    }
    \date{}
\begin{document}
\opt{aistats}{
    \twocolumn[
    \aistatstitle{Budget-Constrained Bounds for Mini-Batch Estimation of Optimal Transport}
    \aistatsauthor{ Author 1 \And Author 2 \And  Author 3 }
    \aistatsaddress{ Institution 1 \And  Institution 2 \And Institution 3 } ]
}
\opt{arxiv}{
    \maketitle
}

\begin{abstract}
Optimal Transport (OT) is a fundamental tool for comparing probability distributions, but its exact computation remains prohibitive for large datasets. In this work, we introduce novel families of upper and lower bounds for the OT problem constructed by aggregating solutions of mini-batch OT problems. 
The upper bound family contains traditional mini-batch averaging at one extreme and a tight bound found by optimal coupling of mini-batches at the other. 
In between these extremes, we propose various methods to construct bounds based on a fixed computational budget. Through various experiments, we explore the trade-off between computational budget and bound tightness and show the usefulness of these bounds in computer vision applications.
\end{abstract}
\section{Introduction}

Optimal Transport (OT) distances, in particular the Wasserstein distance, have become a popular tool in machine learning for tasks ranging from domain adaptation \citep{courty2017optimal} to generative modeling \citep{genevay2018learning, salimans2018improving}. From among its many desirable properties, we highlight that OT provides a principled and general approach to lift a metric between samples into one between distributions, is underpinned by a mature theory \citep{villani2003topics, villani2008optimal}, and has a well-understood sample complexity  \citep{genevay2019sample, mena2019statistical}. \looseness=-1

Historically, a primary barrier to the wider adoption of OT in machine learning and other data-intensive fields has been its computational cost. In the classic formulation by \citet{kantorovich1942translocation}, the discrete OT problem is a linear programming (LP) problem with cubic complexity and quadratic memory footprint, prohibitive for all but the smallest datasets. Over the past decade, there has been considerable progress towards scaling up the computation of OT, typically by settling for an approximate solution by solving an entropy-regularized problem instead \citep{cuturi2013sinkhorn}. Despite is convenience, this approximation is not always desirable, as it introduces a statistical bias in the problem \citep{chizat2020faster}, yields non-sparse solutions \citep{blondel2018smooth}, and is infamously sensitive to the choice of regularization strength parameter.

But even with entropy-regularized approximation, OT on datasets of machine learning scale remains elusive. Although highly optimized off-the-shelf solvers \citep{flamary2021pot, feydy2019interpolating} have made it possible to solve much larger problems, OT computations on the full MNIST \citep{lecun2010mnist}---the archetypal \textit{toy} machine learning dataset---remains challenging for both entropy-regularized and exact OT on most personal computers, largely due to memory bottlenecks. Scaling up to even larger (but, by machine learning standards, still `benchmark'-sized) datasets such as ImageNet \citep{deng2009imagenet} ($\sim$1M samples of $\sim$50K dimension) is therefore currently infeasible with the standard OT formulation.

A common approach to circumvent this issue in practice is mini-batch estimation: computing OT on smaller subsamples (\textit{mini-batches}) of the data and then averaging their values. This has been particularly exploited in applications of OT to generative modeling \citep{genevay2018learning, salimans2018improving}. Although this was originally done heuristically, recent work has started to investigate the properties of this type of estimator \citep{fatras2021minibatch, fatras2020learning}. For example, it can be shown that this estimator is biased; in fact, it is an upper bound on the exact (full-sample) OT distance and often a loose one. A natural question is whether other,  significantly tighter mini-batch based bounds exist. 

In this work, we introduce new families of upper and lower bounds for the discrete OT problem that rely solely on solutions of mini-batch problems. The key idea behind them is to conceptually break up the original (full-sample) problem into a blockwise-grid of mini-batch problems and find weighted combinations of their solutions that are feasible for the original problem (see Figure~\ref{fig:main_digram}). These bounds can be tightened by optimizing the total cost of these linear combinations, which itself corresponds to an OT problem where the mini-batches play the role of samples. We show that the tightest bound in this class (i.e., the one corresponding to the optimal batch-to-batch coupling) is obtained by solving all the pair-wise mini-batch subproblems \citep{nguyen2022transportation}. This method, however, has quadratic dependence on the number of batches, and, as we show here, the complexity of computing this bound is as a bad as that of solving the entire problem when not parallelizing and thus is often still infeasible in practice.

In response, we propose to approximate this best-in-class bound using the solution of only a subset of the mini-batch problems with size determined by a pre-specified budget. We investigate various approaches to select the subset of problems to solve, ranging from greedy heuristic methods to provable approximate OT methods. The resulting bounds interpolate between the usual mini-batch averaging bound (which has a linear dependence on the number of batch problems solved) and the `all pair-wise problem' (quadratic) solution, providing a simple way to trade off computational cost for estimation accuracy (Fig.~\ref{fig:couplings}). 

We investigate the empirical behaviour of all of these bounds in a series of experimental evaluations on computer vision datasets. The results confirm that these estimators reliably trade-off computation for accuracy and that they provide reasonable approximations to the best-in-class bound even in low-budget settings. In particular, our experiments with two-sample testing show that these estimators yield tests with negligible decrease in power compared to much more expensive estimators.

\ifbool{istwocolumn}{
\begin{figure*}[t!]
    \centering
    \includegraphics[width=\textwidth]{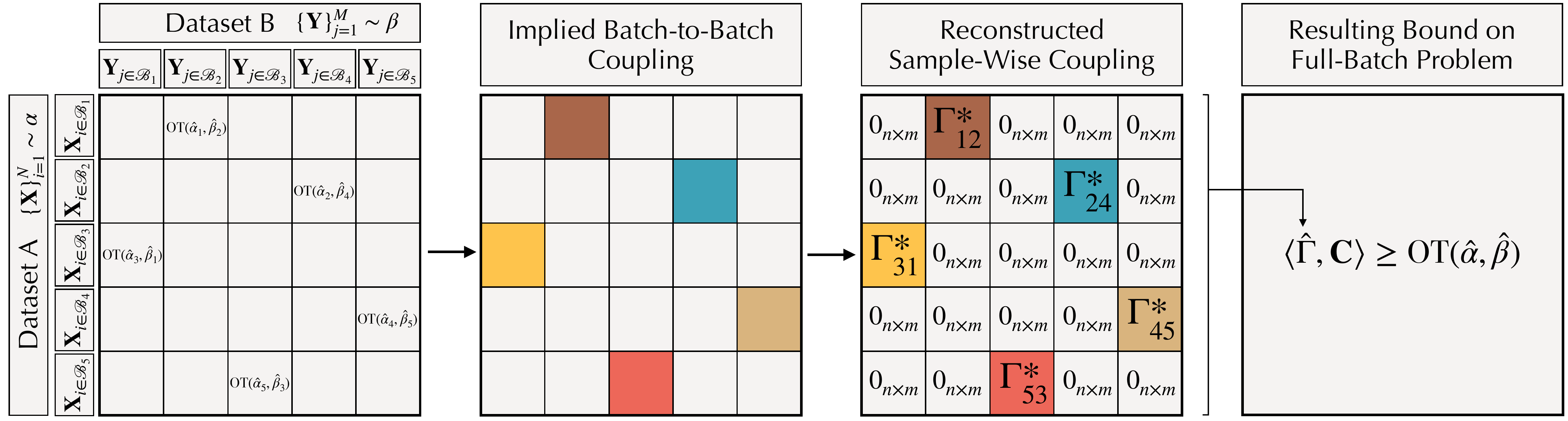}
    \caption{\textbf{Bounds via block solutions}. For two datasets split into mini-batches, any choice of pairs of mini-batches to compare implies a (batch-wise) coupling between them. Using the solutions of these batch-to-batch problems, we can construct a feasible solution to the full-batch problem, thus obtaining an upper bound for its optimal solution. Here we explore strategies to tighten this bound by optimizing the construction of such solutions for a given computational budget.
    }
    \label{fig:main_digram}
\end{figure*}
}{
\begin{figure}[t]
    \centering
    \includegraphics[width=\textwidth]{figs/OTBatch_Diagram_Cropped_Color.pdf}
    \caption{\textbf{Bounds via block solutions}. For two datasets split into mini-batches, any choice of pairs of mini-batches to compare implies a (batch-wise) coupling between them. Using the solutions of these batch-to-batch problems, we can construct a feasible solution to the full-batch problem, thus obtaining an upper bound for its optimal solution. Here we explore strategies to tighten this bound by optimizing the construction of such solutions for a given computational budget.
    }
    \label{fig:main_digram}
\end{figure}
}
\section{Related Work}\label{sec:related}

\paragraph{Mini-Batch Optimal Transport.} Estimation of OT distances through mini-batch computation was first proposed by \citet{genevay2019sample, salimans2018improving} in the context of using entropy-regularized OT as a loss function for generative modeling. \citet{fatras2020learning, fatras2021minibatch, fatras2021unbalanced} have thoroughly studied the properties of the usual naive averaging mini-batch estimator. Recent work by \citet{nguyen2022transportation} proposes an estimator to the OT problem that is similar to the first upper bound (that requires solving \textit{all} subproblems) proposed here. Although both rely on a hierarchical characterization of the OT problem, our work differs in that we study this approximate solution as an upper bound (instead of a generic estimator), we provide accompanying lower bounds, and---crucially---we propose budget-constrained bounds that avoid the need to solve all mini-batch problems. 

\paragraph{Hierarchical, anchor, and low-rank OT.}
Nested or hierarchical OT distances have also been proposed in other contexts, such as topic modeling \citep{yurochkin2019hierarchical} and medical imaging \citep{yeaton2022hierarchical}. These works typically seek to model existing domain-specific hierarchical structures, rather than imposing them for computational reasons, as we do here. Also related are anchor-based OT distances \citep{sato2020fast, lin2021making}, which impose a hierarchical structure to make the solution of the OT problem more robust and interpretable. A different but equally active area of research seeks to improve the computational complexity of the OT problem through low-rank approximations for the cost matrix \citep{altschuler2020polynomial}, transport plan \citep{forrow2019statistical, scetbon2021low-rank}, or both \citep{scetbon2021linear-time}.

\paragraph{Efficient approximate OT.} When the ground metric is embedded into the $\ell_p$-distance with $1\leq p\leq2$ in $\R^d$, a fruitful line of work has focused on fast multiplicative approximation algorithms for OT using probabilistic tree embeddings, including Quadtree \citep{charikar2002similarity,indyk2003fast,andoni2008earth}, Flowtree \citep{backurs2020scalable,chen2022new}, and the Tree-Sliced Wasserstein distance \citep{le2019tree,le2021flow}, which can be seen as a generalization of the Sliced Wasserstein distance \citep{rabin2011wasserstein,bonneel2015sliced,kolouri2016sliced,carriere2017sliced}. These methods provably approximate the OT cost up to (poly)logarithmic factors, while running in time nearly linear in the total support size of the input measures and in the dimensionality $d$. Thus, they are efficient even in the high-dimensional regime.

\paragraph{Other bounds on OT.} Although upper bounds to the OT problem can be easily constructed, efficiently computable ones are scarce. Among those, there are coupling-based methods \citep{biswas2021bounding} and variational methods \citep{huggings2020variational}. Lower bounds (even non-computable ones) are harder to construct. Perhaps the best-known one is by \citet[Thm.~2.1]{gelbrich1990formula}.

\section{Computing Optimal Transport}
We consider measurable Polish spaces $\cX$ and $\cY$, where typically $\cX,\cY \subset \sR^d$, and denote by $\cP(\cX), \cP(\cY)$ the spaces of Borel probability measures defined on them. For samples $\rmX = (X_1, \dots, X_n) \iid \alpha \in \cP(\cX)$ and $\rmY=(Y_1, \dots, Y_m) \iid \beta \in \cP(\cY)$, we denote by $\alpha_n$ and $\beta_m$ the empirical distributions supported on $\rmX$ and $\rmY$ respectively. 
We denote by $m(\alpha)= \int_{\cX} \dif\alpha$ the total mass of a measure and by $\alpha \otimes \beta$ the product measure. When $n>1$, $\alpha^{\otimes n} \eqdef \alpha \otimes \cdots \otimes \alpha$. Thus, an i.i.d.~sample $\X$ of size $n$ from $\alpha$ has law $\alpha^{\otimes n}$. %
For $\a\!\in\!\R^n, \b\!\in\!\R^ m$, $\a \oplus \b$ is the matrix with entries $(\a \oplus \b)_{ij} = a_i + b_j$, i.e., $\a \oplus \b = \a \ones_m^\top + \ones_n \b^\top$. Finally, $\Delta_n$ denotes the probability $n$-simplex and $\llbracket n \rrbracket \eqdef \{1, \dots, n\}$. 

\subsection{Finite-sample estimation of OT distances}

Consider two probability measures $\alpha \in \mathcal{P}(\cX), \beta \in \cP(\cY)$. The Kantorovich formulation of Optimal Transport allows us to compare them in terms of
\begin{equation}\label{eq:ot_continuous}
    \OT(\alpha, \beta) = \inf_{\pi \in \Pi(\alpha, \beta)} \iint_{\cX \times \cY} c(x,y) \dif \pi(x,y)    
\end{equation}
where $\Pi(\alpha, \beta)$ is the set of couplings (\textit{transportation plans}) between $\alpha$ and $\beta$.
Formally, $\Pi(\alpha, \beta) \eqdef \{\pi \in \cP(\cX\!\times\!\cY) \suchthat P_{x\sharp}\pi = \alpha,  P_{y\sharp}\pi=\beta \}$. In practice, the measures of interest are often discrete (e.g., histograms $\a \in \Delta_n, \b \in \Delta_m$) or are continuous but accessible only through finite samples, in which case we can express them as $\alpha_n = \frac{1}{n}\sum_{i=1}^n a_i\delta_{X_i}$, $\beta_m = \frac{1}{m}\sum_{j=1}^m b_j\delta_{Y_j}$, where $X_i$ and $Y_j$ are the support points and $\a \in \Delta_n, \b\in \Delta_m$ are their associated probability vectors. For the latter case (finite samples from continuous measures), we assume $(X_1, \dots, X_n) \iid \alpha$, $(Y_1, \dots, Y_m) \iid \beta$ and thus use uniform weights $\a = \tfrac{1}{n} \ones_n$, $\b = \tfrac{1}{m} \ones_m$. \looseness=-1

For discrete distributions, OT becomes a linear program,
\begin{equation*}%
    \!\!\!\OT(\alpha_n, \beta_m) = \!\!\min_{\P \in \U(\a, \b)} \!\langle \mathbf{C} , \P \rangle  = \!\min_{\P \in \U(\a, \b)} \! \sum_{i,j=1}^{n,m} C_{ij}P_{ij},
\end{equation*}
where $C_{ij} = c(x_i, y_j)$, $P_{ij} = \pi(x_i, y_j)$, and
\begin{equation*}
    \U(\a, \b) \eqdef \bigl \{  \P \in \sR^{n\times m}_+ \st \P\ones_m=\a, \P^{\top}\ones_n = \b \bigr \}.
\end{equation*}

 This problem has an equivalent dual formulation,
\begin{equation}\label{eq:ot_primal}
   \!\!\OT(\alpha_n, \beta_m) = \sup_{(\f, \g) \in \mathcal{R}(\mathbf{C})}  \langle \f, \a \rangle +  \langle \g, \b \rangle,
\end{equation}
where $\f$ and $\g$, called the Kantorovich potentials, are taken in the set $\mathcal{R}(\mathbf{C})=\{ (\f,\g) \in \R^{n\times m} \st \f \oplus \g \leq \mathbf{C}\}$.  

\citet{kosowsky1994invisible,galichon2009matching,cuturi2013sinkhorn} introduced a regularized version of \plaineqref{eq:ot_primal}:
\begin{equation}\label{eq:ot_primal_entreg}
    \OTe(\alpha_n, \beta_m) \eqdef  \min_{\P \in \U(\a, \b)} \langle \mathbf{C} , \P \rangle + \varepsilon H(\P),
\end{equation}
which can be solved much more efficiently, e.g., using the Sinkhorn algorithm \citep{sinkhorn1964relationship}. Since $\OTe(\alpha, \alpha)=0$ is not guaranteed, a `debiased' version this problem, known as the Sinkhorn Divergence \citep{feydy2019interpolating, genevay2019sample} is often used instead:
\begin{equation*}%
    \SDe(\alpha, \beta) \eqdef \OTe(\alpha, \beta) - \tfrac{1}{2} \bigl(\OTe(\alpha, \alpha) +\OTe(\beta, \beta) \bigr).
\end{equation*}
Henceforth, we assume a choice of OT `kernel function' (one of $\OT, \OTe$, or $\SDe$), but denote it indistinctly by $\OT(\cdot,\cdot)$ for notational simplicity.

\subsection{Mini-batch partitioning of datasets}\label{sec:partitioning}
Consider a pair of datasets $\X\in \R^{N\times d}$ and $\Y\in \R^{M\times d}$ with associated empirical distributions $\alpha_N, \beta_M$. If $N$ and $M$ are large, computing $\OT(\alpha_N, \beta_M)$ directly might be prohibitive. A common strategy is to instead solve smaller problems using \textit{mini-batches} from $\X$ and $\Y$ to estimate this quantity \citep{genevay2018learning}. Given a predefined batch size $n$ (and assuming for simplicity that $N/n=k$ is integer), we can view $\X$ as being sampled batch-wise as $\X = \bigl ( (X_1^1, \dots, X_n^1), \dots,  (X^k_1, \dots, X^k_n) \bigr) = (\X^1, \dots, \X^k)$, where  $\X^s \sim \alpha^{\otimes n}$. Analogously, we write $\Y = (\Y^1, \dots, \Y^k)$ with $\Y^t\sim \beta^{\otimes m}$. Although this implicitly assumes that the batches are taken as contiguous subsets of the original dataset (i.e., $X_i^s = X_{n(s-1)+i}$ ), as typically done in practice,  in general any partition of the samples gives rise to a meaningful set of mini-batches. Thus, we can generally consider mini-batches defined by lists of indices $\bigcup_{s=1}^k \mathrm{B}^s_{x}  = \llbracket N \rrbracket$ and $\bigcup_{t=1}^k \mathrm{B}^t_{y}  = \llbracket M \rrbracket$. For convenience, we define mappings $\sigma_x$ and $\sigma_y$ from mini-batch index to dataset index, whereby for $i \in \llbracket n \rrbracket, s \in \llbracket k \rrbracket$, $\sigma_x(i,s)= j \in \llbracket N \rrbracket$ means the $i$-th element in $\mathrm{B}^s_{x}$ has index $j$ in the full dataset (i.e., $X^s_i = X_{\sigma_x(s,i)}$) and analogously for $\sigma_y$. %
In addition, for simplicity we will assume $n$ and $m$ are chosen so that $N/n=M/m=k$.  \looseness=-1

Each mini-batch from $\X$ has two associated empirical measures: an unnormalized one $\tilde{\alpha}^{s}= \sum_{i\in  \mathrm{B}^s_{x} }\a_{i}\delta_{X_i}$ and a normalized one $\alpha^{s}= \sum_{i=1}^n\a_i^s\delta_{X_i^s}$, with $\a^s \in \Delta_n$. When the full discrete measure $\alpha_N$ has uniform weights $(\a_i = \tfrac{1}{N})$, we simply have $\a_i^s = \frac{N}{n}\a_{\sigma_x(i,s)} = \frac{1}{n}$. In general, we have $\a_i^s = \tfrac{1}{m(\tilde{\alpha}^{s})}\a_{\sigma_x(i,s)}$. For $\Y$, $\tilde{\beta}^{t}$ and $\beta^{t}$ are defined analogously. We also define the probability vectors of aggregated mini-batch masses $\tilde{\a},\tilde{\b} \in \Delta_k$ with entries $\tilde{\a}_s \eqdef m(\tilde{\alpha}^s) = \sum_{ i \in \mathrm{B}^s_{x}} \a_i$ and $\tilde{\b}_t \eqdef m(\tilde{\beta}^t) = \sum_{ j \in \mathrm{B}^t_{y}} \b_j$\footnote{In matrix form: $\tilde{\a} = (\mathbf{I}_k \otimes \ones_n^\top) \a$, and $\tilde{\b} = (\mathbf{I}_k \otimes \ones_m^\top) \b$.}. For the remainder of this work, we will make the following simplifying assumption, which holds trivially in the usual uniform-weight setting and can be imposed with judicious choice of mini-batch sizes in general:
\begin{assumption}[Uniform mass mini-batches]
  The grouping of samples into mini-batches is such that $m(\tilde{\alpha}^s) = m(\tilde{\beta}^t)$ for every $s,t \in \llbracket k \rrbracket$ (i.e., $\tilde{\a} = \tilde{\b} = \tfrac{1}{k}\ones_k$). 
\end{assumption}

The grouping into mini-batches induces a partition in the discrete optimal transport problem between $\alpha_N$ and $\beta_M$ as well. The full-sample cost matrix $\mathbf{C} \in \R^{N\times M}$, where $\mathbf{C}_{ij} = c(X_i, Y_j)$, inherits a $k$-by-$k$ block structure whereby the $s,t$-block is $\mathbf{C}^{s,t} \in \R^{n\times m}$, with $\mathbf{C}^{s,t}_{ij}= c(X^s_i, X^t_j)$. Analogously, the same block structure partitions $\P \in \R^{N\times M}_{+}$ into $k^2$ submatrices $\P^{s,t} \in \R^{n\times m}_{+}$ with entries $\P^{s,t}_{ij} = \P(X^s_i, X^t_j)$. Our goal is to estimate $\OT(\alpha_N, \beta_M)$ using the solutions of the subproblems $\OT(\alpha^{s},\beta^{t})$. Over the next two sections, we present methods to obtain upper and lower bounds for the former using the latter.

%

%
%
%

%

%

\iffalse
\subsubsection{Estimating OT Distance from Subsets}

Let $\textup{D}(\cdot, \cdot)$ be one of $\W$, $\We$ or $\SD$. Following \citep{fatras2020learning}, we define the continuous loss: 
\[ \textup{B}_{\textup{D}}(\alpha, \beta) = \Exp_{X \sim\alpha^{\otimes n}, Y\sim \beta^{\otimes m}} \textup{D}(\alpha, \beta) \]

Between empirical measures $\alpha_n, \beta_m$, we can compute: 

\[ \textup{B}_{\textup{D}}(\alpha_n, \beta_m) = \Exp_{X \sim\alpha^{\otimes n}, Y\sim \beta^{\otimes m}} \textup{D}(\alpha, \beta)\]

Two general questions:
\begin{itemize}
    \item Need to bound
\end{itemize}

\begin{theorem}[\citep{genevay2019sample}]
    %
    
\end{theorem}

\fi

\section{Upper Bounds via Primal Solutions}\label{sec:ub}

In this section, we introduce a family of upper bounds to the OT problem constructed using solutions to mini-batch subproblems. We discuss lower bounds in Appendix~\ref{sec:lower_bounds}. 

\subsection{Bounding through mini-batch coupling}
Recall the primal form of the discrete OT problem between the empirical distributions $\alpha_N$ and $\beta_M$,
\begin{equation}\label{eq:primal_prob_full}
	\OT(\alpha_N, \beta_M) = \min_{\P \in \U(\a, \b)} \langle \P, \mathbf{C} \rangle.	\tag{FP}
\end{equation}
The partitioning described in the previous section gives rise to $k^2$ mini-batch problems, one for each $(s,t)$ pair:
\begin{equation}\label{eq:primal_prob_batch}
	\medspace \OT(\alpha^{s},\beta^{t}) = \min_{\P \in \U(\a^{s},\b^{t})} \langle \P, \mathbf{C}^{st} \rangle \quad s,t\in \llbracket k \rrbracket.	\tag{BP}
\end{equation}
Let $\P^{st}_{*}$ be the optimal solutions of these subproblems and $d_{st} \eqdef \OT(\alpha^{s},\beta^{t})$ their value. Our goal is use these to estimate $\OT(\alpha_N, \beta_M)$. To this end, it will be useful to rewrite problem \plaineqref{eq:primal_prob_full} using the block-structure induced by the mini-batch partitioning: 
\begin{equation}\label{eq:primal_prob_full_blockified}
\begin{gathered}
\OT(\alpha_N, \beta_M) = \min_{\P} \sum_{s,t=1}^k \langle \P^{st}, \mathbf{C}^{st} \rangle \quad \text{subject to:} \tag{FP'} \\[0ex] \forall s: 	\sum_{t=1}^K   (\P^{st}) \ones_m \! = \a^{s} ,  \  \forall t:  \sum_{s=1}^K  (\P^{st})^{\top} \ones_n \!= \b^{t}
\end{gathered}
\end{equation}
Crucially, note that while \plaineqref{eq:primal_prob_batch} is a collection of decoupled problems, \plaineqref{eq:primal_prob_full_blockified} is a single (coupled) problem.

The OT problem can be easily upper-bounded by constructing feasible solutions to its primal (minimization) problem. In our setting, a natural idea is to construct a feasible solution $\P$ for \plaineqref{eq:primal_prob_full} using the batch-wise optimal couplings $\P^{st}_{*}$ obtained by solving the sub-problems \plaineqref{eq:primal_prob_batch}. Theorem~\ref{thm:blockwise_linear_primal} provides a simple way to do so using scalar multiples of the mini-batch problem solutions. \looseness=-1

\begin{theorem}\label{thm:blockwise_linear_primal}
    Assume uniform-weight partitions, and let $\{\P^{(st)}_*\}_{s,t=1}^{k,k}$ be optimal solutions to the batch primal problems \plaineqref{eq:primal_prob_batch}. Let $\tilde{\P} \in \R^{N\times M}$ be defined block-wise\footnote{Equivalently, $\tilde{\P} \eqdef \mathbf{W} \ast \P$, where $\ast$ is the Khatri-Rao product \citep{khatri1968solutions}.} as  
    \begin{equation}\label{eq:blockwise_weighted_coupling}
        [\tilde{\P}_{ij}]_{i \in \mathrm{B}^s_x, j\in \mathrm{B}^t_y} \eqdef \omega_{st} \P^{st}_* \qquad  \forall s,t \in \llbracket k \rrbracket
    \end{equation}
    for some scalar values $\omega_{st}$, and let $\mathbf{W}$ be the $k$-by-$k$ matrix consisting of these values. If $\mathbf{W} \in \mathbf{U}(\tilde{\a}, \tilde{\b})$, then $\tilde{\P}$ is feasible for \plaineqref{eq:primal_prob_full} with objective value
    \[   \langle \tilde{\P}, \mathbf{C} \rangle = \langle \mathbf{W}, \D \rangle =  \sum_{s,t}^k \omega_{st} \OT(\alpha^s,\beta^t),   \]
    where the entries of $\D$ are $d_{st} \triangleq \OT(\alpha^s,\beta^t)$, the optimal values to the mini-batch problems. 
\end{theorem}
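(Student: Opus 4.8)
The plan is to verify the two claimed properties of $\tilde{\P}$ — primal feasibility for \plaineqref{eq:primal_prob_full}, and the stated value of its objective — by direct block-wise computation, leaning on the assumption of uniform-weight partitions to control the normalization constants.

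First I would set up notation. Since the partitions have uniform weights, $\a^s = \tfrac{1}{n}\ones_n$, $\b^t = \tfrac{1}{m}\ones_m$, and by the Uniform-mass assumption $\tilde{\a} = \tilde{\b} = \tfrac{1}{k}\ones_k$. Each $\P^{st}_* \in \U(\a^s, \b^t)$, so $\P^{st}_*\ones_m = \tfrac1n \ones_n$ and $(\P^{st}_*)^\top \ones_n = \tfrac1m\ones_m$. The candidate $\tilde{\P}$ has nonnegative entries whenever the $\omega_{st} \ge 0$, which is guaranteed since $\mathbf{W} \in \mathbf{U}(\tilde\a,\tilde\b) \subset \R^{k\times k}_+$. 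The real content is checking the marginal constraints in the block form \plaineqref{eq:primal_prob_full_blockified}: for each $s$, $\sum_{t=1}^k (\tilde\P^{st})\ones_m = \sum_t \omega_{st}\P^{st}_*\ones_m = \bigl(\sum_t \omega_{st}\bigr)\tfrac1n\ones_n$, and this must equal $\a^s = \tfrac1n\ones_n$, i.e. we need $\sum_t \omega_{st} = 1$ for all $s$. That is exactly the row-sum constraint $\mathbf{W}\ones_k = \tilde\a \cdot k^{-1}\cdot k = \ldots$ — more precisely, $\mathbf{W}\ones_k = \tilde\a$ with $\tilde\a = \tfrac1k\ones_k$ gives $\sum_t \omega_{st} = \tfrac1k$; I need to reconcile this with the $\tfrac1n$ factors, so the right bookkeeping is that the $s$-block row marginal of $\tilde\P$ should be the restriction of $\a$ (the full weight vector) to $\mathrm{B}^s_x$, whose entries are $\tfrac1N = \tfrac1{nk}$, and $\bigl(\sum_t\omega_{st}\bigr)\tfrac1n = \tfrac1{nk}$ forces $\sum_t \omega_{st} = \tfrac1k$, matching $\mathbf{W}\ones_k = \tfrac1k\ones_k = \tilde\a$. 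The column marginals work symmetrically using $\mathbf{W}^\top\ones_k = \tilde\b$. So feasibility of $\tilde\P$ for \plaineqref{eq:primal_prob_full} reduces precisely to the hypothesis $\mathbf{W}\in\mathbf{U}(\tilde\a,\tilde\b)$.

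For the objective value, I would expand $\langle \tilde\P, \mathbf{C}\rangle$ over the induced block structure: $\langle\tilde\P,\mathbf{C}\rangle = \sum_{s,t=1}^k \langle \tilde\P^{st}, \mathbf{C}^{st}\rangle = \sum_{s,t}\omega_{st}\langle\P^{st}_*, \mathbf{C}^{st}\rangle = \sum_{s,t}\omega_{st}\, d_{st} = \langle\mathbf{W},\D\rangle$, where $\langle \P^{st}_*,\mathbf{C}^{st}\rangle = \OT(\alpha^s,\beta^t) = d_{st}$ by optimality of $\P^{st}_*$ for the mini-batch problem \plaineqref{eq:primal_prob_batch}. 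This is a one-line computation once the block decomposition of the inner product is in place.

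The main obstacle is purely bookkeeping: getting the normalization constants consistent between the \emph{normalized} mini-batch measures $\alpha^s$ (weights in $\Delta_n$) appearing in the subproblems \plaineqref{eq:primal_prob_batch} and the \emph{unnormalized} restrictions of $\alpha_N$ (weights $\tfrac1N$) that the blocks of a feasible $\P$ for \plaineqref{eq:primal_prob_full} must respect. The factor $\tfrac{N}{n} = k$ relating $\a^s_i$ to $\a_{\sigma_x(i,s)}$, together with the Uniform-mass assumption making all $\tilde\a_s$ equal to $\tfrac1k$, is what makes a single global scalar $\omega_{st}$ per block suffice; I should state this relationship cleanly up front so the marginal check above goes through without ambiguity. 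Everything else — nonnegativity, the inner-product expansion — is immediate.
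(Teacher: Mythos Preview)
Your proposal is correct and follows essentially the same approach as the paper: verify the block-wise marginal constraints of \plaineqref{eq:primal_prob_full_blockified} using feasibility of each $\P^{st}_*$ for its subproblem, reduce to the row/column-sum conditions $\mathbf{W}\ones_k=\tilde{\a}$, $\mathbf{W}^\top\ones_k=\tilde{\b}$, and then expand $\langle\tilde{\P},\mathbf{C}\rangle$ block-wise to get $\langle\mathbf{W},\D\rangle$. Your explicit tracking of the $\tfrac{1}{n}$ vs.\ $\tfrac{1}{nk}$ normalization and the nonnegativity check are in fact slightly more careful than the paper's write-up, which leaves these implicit.
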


\begin{figure*}
     \centering
     \begin{subfigure}[b]{0.32\textwidth}
         \centering
        \includegraphics[width=\textwidth, trim= {0 0.2cm 1.8cm 1cm}, clip]{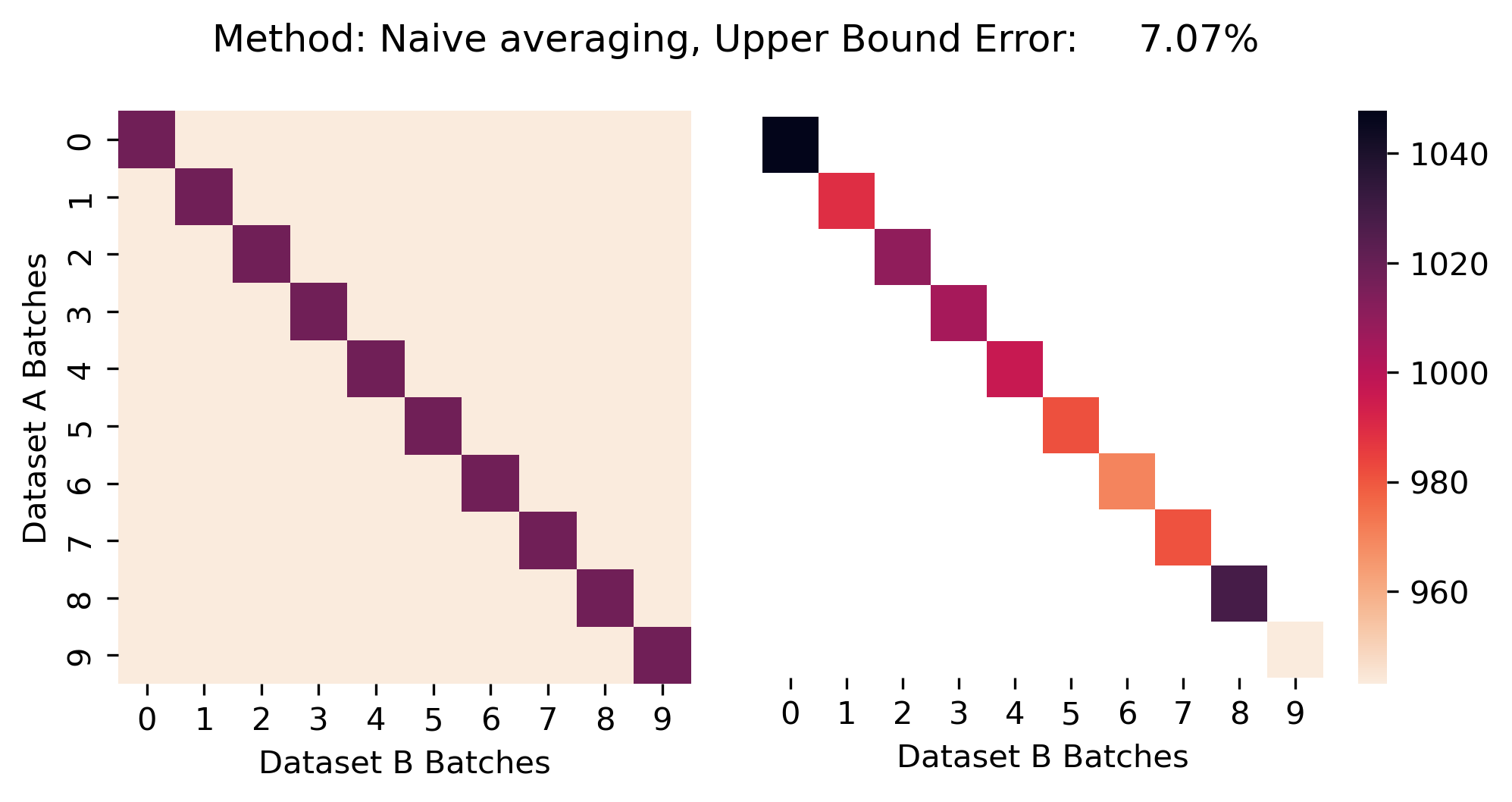}%
         \caption{Naive random batch matching}
         \label{fig:couplings_naive}
     \end{subfigure}
     \hfill
     \begin{subfigure}[b]{0.32\textwidth}
         \centering
        \includegraphics[width=\textwidth, trim= {0 0.2cm 1.8cm 1cm}, clip]{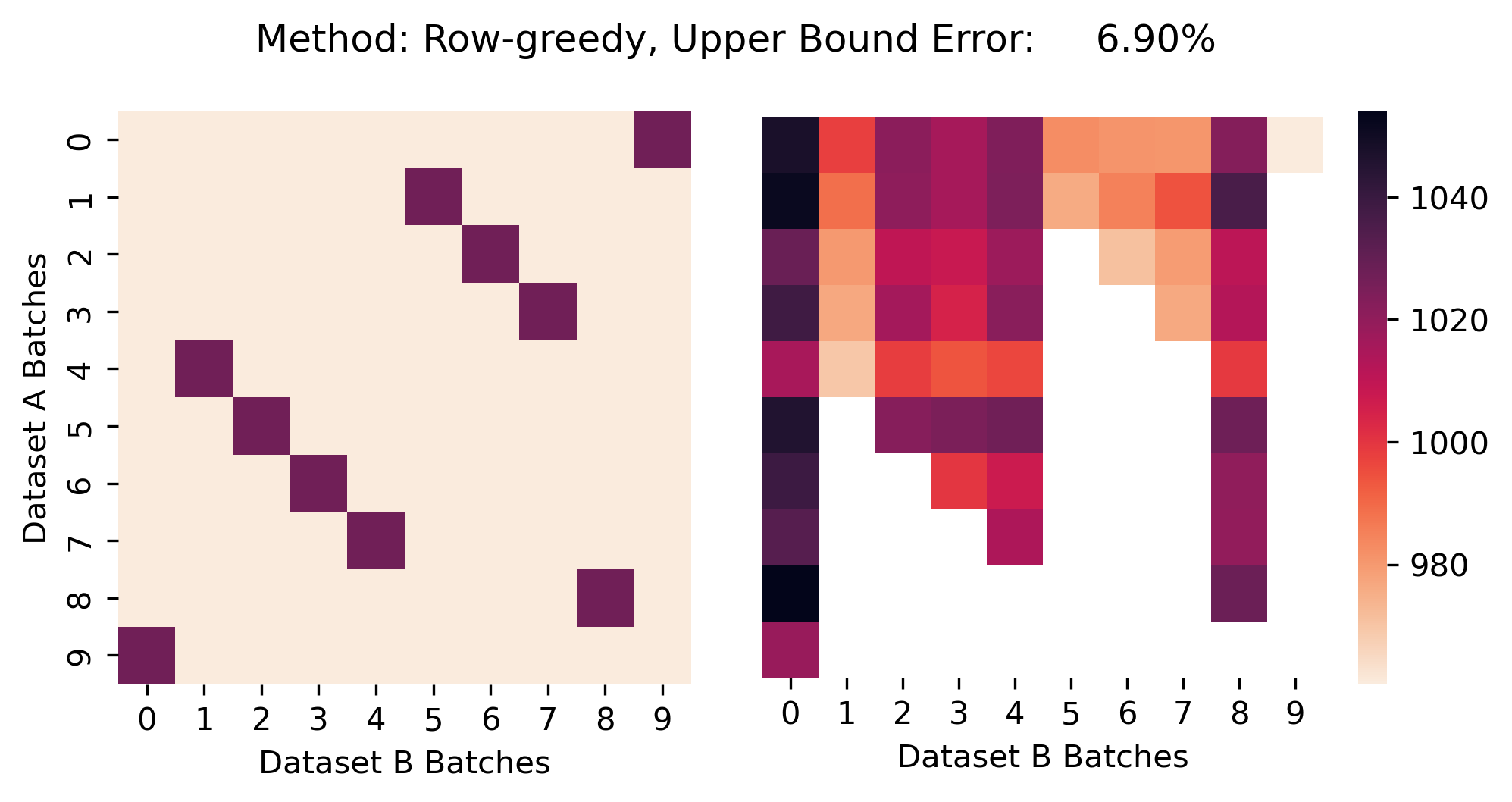}%
         \caption{Greedy batch matching}
         \label{fig:couplings_greedy}
     \end{subfigure}
     \hfill
     \begin{subfigure}[b]{0.32\textwidth}
         \centering
        \includegraphics[width=\textwidth, trim= {0 0.2cm 1.8cm 1cm}, clip]{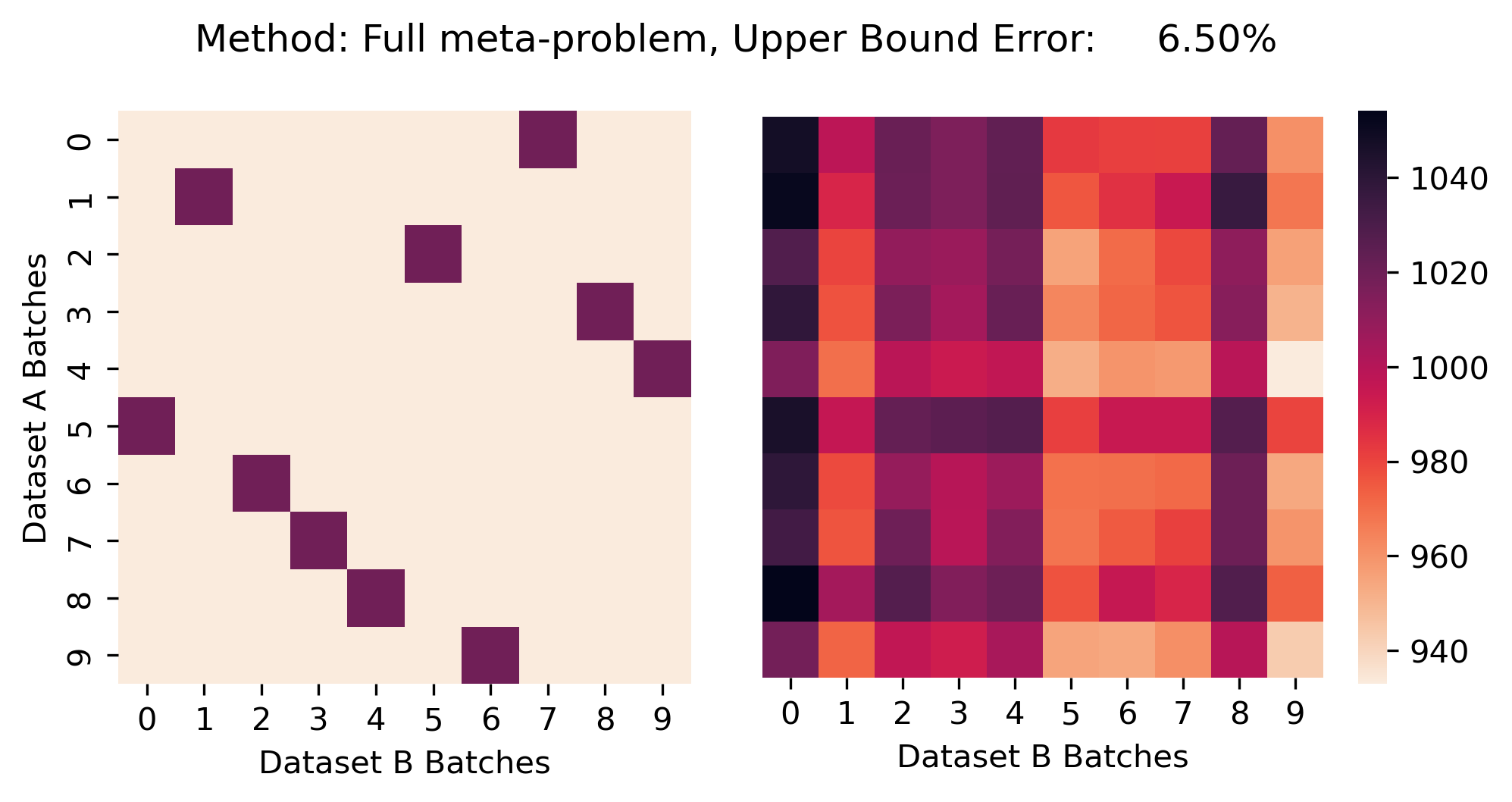}%
         \caption{\textsc{bhot}: optimal batch matching}
         \label{fig:couplings_bhot}
     \end{subfigure}
    \caption{\textbf{Batch-to-batch OT costs (R) and couplings (L)}. The usual averaging approach (\ref{fig:couplings_naive}) yields an upper bound to the full OT problem by solving $k$ mini-batch problems. The batch-hierarchical OT yields a tighter bound by solving $k^2$ OT problems and finding the optimal coupling between batches (\ref{fig:couplings_bhot}). The budget-constrained methods we propose (e.g., greedy matching \ref{fig:couplings_greedy}) interpolate between these two by solving a predetermined number of OT problems, creating a partial estimate of the batch-to-batch cost matrix (filled entries) and thus yielding a potentially sub-optimal batch coupling.}\label{fig:couplings}
\end{figure*}

The feasible solutions $\tilde{\P}$ characterized by Theorem~\ref{thm:blockwise_linear_primal} define a family of upper-bounds for \plaineqref{eq:primal_prob_full} parametrized by $\mathbf{W}$. The tightest bound in this family can be found by solving the following \textit{meta-problem} between mini-batches:
\begin{equation}\label{eq:primal_prob_meta}
	\min_{\mathbf{W}\in \U( \tilde{\a},  \tilde{\b})} \langle \mathbf{W}, \D \rangle = \min_{\mathbf{W}\in \U( \tilde{\a},  \tilde{\b})} \sum_{s,t}^K \omega_{st} d_{st}  	\tag{MP}
\end{equation} 
Note that \plaineqref{eq:primal_prob_meta} is itself an optimal transport problem where the mini-batches play the role of samples, their mass is proportional to the total mass of the samples in the batch, and the ground cost between mini-batches is the OT cost between their elements, i.e., $d_{st} = \OT(\alpha^s,\beta^t)$. We refer to the value of \plaineqref{eq:primal_prob_meta} (the tightest bound in its class) as the \textit{batch-hierarchical OT bound} (\textsc{bhot}). Thus, Theorem~\ref{thm:blockwise_linear_primal} in combination with meta-problem \plaineqref{eq:primal_prob_meta} shows an equivalence between constructing feasible solutions for---and therefore bounding---the full sample problem and finding a coupling between the batches of the two datasets. We will exploit this interpretation in the next section to propose computationally efficient bounds. 

Interestingly, the usual approach of estimating \plaineqref{eq:primal_prob_full} by averaging the solutions of $k$ problems on mini-batches sampled sequentially in parallel from $\alpha_N$ and $\beta_M$ \citep{genevay2018learning, salimans2018improving, fatras2021minibatch} is contained in the family of Theorem~\ref{thm:blockwise_linear_primal}. Indeed, taking $\omega_{st} = \tfrac{1}{k}$ if $s=t$ and $0$ otherwise, we obtain a block-diagonal solution $\tilde{\P}$ with optimal value $\frac{1}{k}\sum_{s=1}^k  \OT(\alpha^s,\beta^t)$. In particular, this implies that the \textsc{bhot} bound is, in the worst case, equal to the naive averaging bound. 

It must be emphasized that \textsc{bhot} is tightest only within the class of bounds of the form \plaineqref{eq:blockwise_weighted_coupling} but might not be the tightest one constructed from block-wise solutions. Indeed, there are other conceivable ways to combine such solutions into a feasible solution to \plaineqref{eq:primal_prob_full}, any of which provides a valid upper bound. The advantage of this particular family is that it allows for the computation of the upper bounds without having to explicitly construct any $N\times M$-sized object, instead relying on a linear combination of the values of mini-batch problems. Furthermore, it is optimizable by solving a $k$-by-$k$ problem only. We discuss computational aspects of this bound in detail in Section~\ref{sec:computation_ub}.

\begin{algorithm}[t]
\caption{Batch-Hierarchical OT Upper Bound}\label{algo:bhot_ub}
\KwIn{Data $\X \in \R^{N\times d}, \Y\in \R^{M\times d}$; num.~batches $k$;  }

\tcc{Stage 1: solving batch OT problems }
$n,m \gets \textup{GetBatchSize}(k,N,M)$\;
$\{ \X^1, \dots, \X^k \} \gets \text{Partition}(\X, n)$\;
$\{ \Y^1, \dots, \Y^k \} \gets \text{Partition}(\Y, m)$\; 
$\mathbf{D} \gets  \mathbf{0}_{k\times k}$ \;
\For{$(s,t) \in \llbracket k \rrbracket \times \llbracket k \rrbracket $}{
    $\mathbf{C}^{st} \gets \textup{PairwiseDistances}(\X^s, \Y^t)$\;
    $\mathbf{D}[s,t] \gets \OT(\mathbf{C}^{st}, \tfrac{1}{n}\ones_n, \tfrac{1}{m}\ones_m)$\;
}
\tcc{Stage 2: aggregating batch solutions}
$d \gets \OT(\mathbf{D}, \tfrac{1}{k}\ones_k, \tfrac{1}{k}\ones_k)$\;
\Return{d}
\end{algorithm}

Algorithm~\ref{algo:bhot_ub} summarizes the derivation above, providing pseudo-code to compute an upper bound to an OT problem by solving $k^2+1$ smaller problems.

\subsection{Complexity of mini-batch estimation}\label{sec:computation_ub}
Solving problem \plaineqref{eq:primal_prob_full} directly would amount to solving a single $N\times M = nk \times mk$ OT problem. This can be done exactly in $\tilde{O}((N+M)NM)$ (i.e., $\tilde{O}((n+m)nmk^3)$) time\footnote{We use $\tilde O(f)$ to denote $O(f\cdot\mathrm{polylog}(f))$.}  via the network simplex \citep{tarjan1997dynamic} or $\varepsilon$-approximately in $\tilde{O}(NM\varepsilon^{-3})=\tilde{O}(nmk^2\varepsilon^{-3})$ via the Sinkhorn algorithm \citep{altschuler2017nearlinear}. Either approach has $O(k^2nm)$ space complexity (since the entire $N \times M$ cost matrix must be computed and stored). This problem is not easily parallelizable. On the other hand, Algorithm~\ref{algo:bhot_ub} involves solving $k^2+1$ problems: $k^2$ problems \plaineqref{eq:primal_prob_batch} of size $n\times m$ and one final meta-problem \plaineqref{eq:primal_prob_meta} of size $k^2$, for a total $\tilde{O}((n+m)nmk^2)$ time complexity if solved exactly or $\tilde{O}(nmk^2\varepsilon^{-3})$ if solved approximately, and $O(nm + k^2)$ space complexity. \looseness=-1

Note that the time complexity of solving this batch-hierarchical problem approximately via the Sinkhorn algorithm is the same (up to constants) as that of solving the full problem \plaineqref{eq:primal_prob_full} when $k\leq n$ (which will typically be the case in practice). However, Algorithm~\ref{algo:bhot_ub} is parallelizable (over batch pairs), so its time complexity can be decreased to $\tilde{O}((n+m)nm)$ (or $\tilde{O}(nm \varepsilon^{-3})$) per processor if run in parallel on $k^2$ processors. \looseness=-1

In summary, if not parallelized \textsc{bhot} is advantageous memory-wise but is suboptimal time-wise, as it returns only an upper bound on the full problem at the same computational cost. If parallelized, it can achieve up to $O(k^2)$ speed-up over the full problem solution.  However, if extreme parallelism is not possible, or if the datasets are very large, even this upper bound might be prohibitive. Next, we investigate how to further reduce its computation. \looseness=-1

\begin{algorithm}[t]
\caption{Budget-Constrained OT Upper Bounds}\label{algo:bhot_ub_constrained}
\KwIn{Data $\X \in \R^{N\times d}, \Y\in \R^{M\times d}$;  num.~batches $k$; budget $B\in \{k,\dots, k^2\}$  }
\tcc{Stage 1: selecting batch pairs}
$n,m \gets \textup{GetBatchSize}(k,N,M)$\;
$\{ \X^1, \dots, \X^k \} \gets \text{Partition}(\X, n)$\;
$\{ \Y^1, \dots, \Y^k \} \gets \text{Partition}(\Y, m)$\; 
$ M \gets \textup{GetMatching}(B, \{\X^i\}, \{\Y^j\})$ \tcp*{$|M|=B$}
\tcc{Stage 2: solving batch problems }
$\mathbf{D} \gets \infty \cdot \ones_{k\times k}$ \;
\For{$(s,t) \in M$}{
    $\mathbf{C}^{st} \gets \textup{PairwiseDistances}(\X^s, \Y^t)$\;
    $\mathbf{D}[s,t] \gets \OT(\mathbf{C}^{st}, \tfrac{1}{n}\ones_n, \tfrac{1}{m}\ones_m)$\;
}
\tcc{Stage 3: aggregating batch solutions}
$d \gets \OT(\mathbf{D}, \tfrac{1}{k}\ones_k, \tfrac{1}{k}\ones_k)$\;
\Return{d}
\end{algorithm}

\subsection{Trading bound tightness for efficiency}\label{sec:tradeoffs}

In the previous section we showed how constructing a certain class of bounds based on solutions of mini-batch OT problems reduces to finding couplings between the mini-batches. %
A basic OT result states that for discrete distributions with uniform weights and equal sample size, the cost-minimizing optimal coupling $\P$ is in fact a permutation matrix (i.e., a matching) \citep[Prop.~2.1]{peyre2019computational}. In our setting, Assumption 1 guarantees this result for \plaineqref{eq:primal_prob_meta}, hence only $k$ of the optimal $\omega_{st}$ weights used by \textsc{bhot} are non-zero. Thus, despite requiring the solution of $k^2$ problems, this bound ultimately uses the value of only $k$ of them. This apparent waste suggests finding alternative methods to find matchings between mini-batches that require solving fewer mini-batch OT problems. \looseness=-1

For this purpose, we now conceptually decouple the process of constructing feasible solutions to \plaineqref{eq:primal_prob_full} from mini-batch solutions \plaineqref{eq:primal_prob_batch} into two steps: (i) matching and (ii) aggregation. The goal of the first step is to propose a low-cost matching between mini-batches, while the latter involves solving mini-batch OT problems between the selected matches and aggregating them into a bound for the full problem. In light of this new conceptual framework, the naive averaging bound can be understood as using a trivial (diagonal) matching between batches (with zero computational cost) and then solving $k$ OT problems between the matched batches in the aggregation step. At the opposite side of the spectrum, \textsc{bhot} requires solving all $k^2$ mini-batch problems to find the optimal matching by solving \plaineqref{eq:primal_prob_meta}, after which the values of the matched mini-batch problems are averaged. Below, we propose several methods that interpolate between these two extremes. For all of them, $B\in \N$ is a pre-specified budget specifying the maximum number of batch OT problems to solve.

\paragraph{Greedy matching.} At a high level, this method proceeds by matching one row (batch of $\X$) at a time to its most similar column (batch of $\Y$), after which this column is removed from the candidates. In its simplest form, this algorithm requires a $B=\binom{k}{2}$ budget. For lower budgets, we generalize this method by first splitting the $B $ among the rows using an allocation function $\mathfrak{a}:\llbracket k \rrbracket \rightarrow \N$ defined recursively as $\mathfrak{a}(1)=1, \mathfrak{a}(s+1) = \mathfrak{a}(s) + \textup{I}\bigl (B\geq \binom{k+1}{2} - \binom{k-s}{2} \bigr)$ for every  $s\in{1,\dots,k-1}$. Intuitively, this function allocates the budget in a diagonal fashion, starting from the first row, until it is exhausted. Once the budget is allocated, we proceed as before, but solving only as many OT problems per row as the allocation allows.

\paragraph{OT with missing costs.} We randomly sample $B$ pairs, without replacement, from the set of distinct index pairs in $\llbracket k \rrbracket\!\times\!\llbracket k \rrbracket$ that include each row and column index at least once. We solve the corresponding $B$ mini-batch problems and fill the corresponding entries of the cost matrix $\mathbf{W}$, assigning a value of $\infty$ (or, in practice, a finite scalar $M\gg 0$) 
to all other entries (corresponding to `missing' unsolved problems). We then proceed to solve the problem \plaineqref{eq:primal_prob_meta} as before. The infinity costs ensure that the optimal coupling has zero value for all entries corresponding to missing costs, and thus their value is not needed to compute the bound. We also consider a variant that greedily selects entries to fill (\textsc{bhot}-MissingGreedy, Appendix~\ref{sec:further_methods}). 

\paragraph{Tree-based batching.} Using probabilistic tree embedding techniques described in Section~\ref{sec:related}, we propose a 2-phase tree-based linear approximation algorithm for \textsc{bhot}. Informally, the first phase invokes the Quadtree algorithm to embed the batches into sparse vectors in $\ell_1$, and the second phase uses the Flowtree algorithm to find an approximately optimal matching between the batch embeddings. The resulting algorithm provably produces a multiplicative approximation for \textsc{bhot}:

\begin{theorem}[\textsc{bhot}-Tree Guarantee]\label{thm:tree}
Let $\Phi_{\mathbf{C}}$ be the aspect ratio\footnote{$\Phi_{\mathbf{C}}=\max_{i,j}\mathbf{C}_{ij}/\min_{i,j:\mathbf{C}_{ij}\neq0}\mathbf{C}_{ij}$.} of the full-sample cost matrix $\mathbf{C}_{ij}$. 
Suppose the costs are given by $\ell_1$-distances ($\mathbf C_{i,j}=c(X_i,Y_j)=\norm{X_i-Y_j}_1$).
\textsc{bhot}-Tree runs in time
$\tilde O(kd(n+m)\log\Phi_{\mathbf{C}})$, and computes a matching $\widetilde{\mathbf{W}}$ that with probability 
$0.99$ satisfies,
\[ \mathrm{BHOT} \leq \langle \widetilde{\mathbf{W}}, \D \rangle \leq O(\log^2(N) \cdot\log^2(d\Phi_{\mathbf{C}})\cdot\log k)\cdot\mathrm{BHOT} .\]
\end{theorem}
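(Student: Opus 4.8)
The plan is to view \textsc{bhot}-Tree as a two-level composition of randomly-shifted quadtree approximations to optimal transport and to track the multiplicative distortion incurred at each level separately; this is exactly what slots into the $\textup{GetMatching}$ step of Algorithm~\ref{algo:bhot_ub_constrained} with $B=k$. In Phase~1 the algorithm builds one quadtree $T_1$ over the $N{+}M$ ground points $\{X_i\}\cup\{Y_j\}\subset\R^d$ (spread $\Phi_{\mathbf C}$, depth $L_1=O(\log\Phi_{\mathbf C})$) and forms, for every batch, its quadtree embedding: the vector $v^s$ (resp.\ $v'^t$) indexed by the edges of $T_1$, whose entry on edge $e$ is $(\text{weight of }e)\cdot(\text{mass of }\alpha^s\text{, resp.\ }\beta^t\text{, in the subtree below }e)$. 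The standard identity for tree optimal transport gives $\|v^s-v'^t\|_1=\OT_{T_1}(\alpha^s,\beta^t)=:\widehat d_{st}$, so after Phase~1 the $2k$ batches sit as genuine $\ell_1$ vectors — each with only $O(n\log\Phi_{\mathbf C})$ (resp.\ $O(m\log\Phi_{\mathbf C})$) nonzeros — whose pairwise $\ell_1$ distances $\widehat d_{st}$ are tree surrogates for the true mini-batch costs $d_{st}=\OT(\alpha^s,\beta^t)$. Phase~2 (the Flowtree step) builds a second quadtree $T_2$ over these $2k$ sparse vectors, computes a tree-optimal transport plan between the marginals $\tfrac1k\ones_k,\tfrac1k\ones_k$, and rounds it, without increasing the tree cost, to a vertex of $\U(\tfrac1k\ones_k,\tfrac1k\ones_k)$ — i.e.\ to $\tfrac1k$ times a permutation; this is the output matching $\widetilde{\mathbf W}$. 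The lower bound is then immediate: by Assumption~1, $\widetilde{\mathbf W}\in\U(\tilde\a,\tilde\b)$ is feasible for \plaineqref{eq:primal_prob_meta}, so $\langle\widetilde{\mathbf W},\D\rangle\ge\mathrm{BHOT}$.

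For the upper bound I would chain four comparisons. (i) Both quadtrees induce \emph{dominating} tree metrics, so $d_{st}\le\widehat d_{st}$ and $\widehat d_{st}=\|v^s-v'^t\|_1\le d_{T_2}(v^s,v'^t)$ entrywise, giving $\langle\widetilde{\mathbf W},\D\rangle\le\langle\widetilde{\mathbf W},\widehat\D\rangle\le\langle\widetilde{\mathbf W},d_{T_2}\rangle$, where $\widehat\D=(\widehat d_{st})$ and $d_{T_2}$ is the $T_2$-distance matrix on the embedding points. (ii) Since $\widetilde{\mathbf W}$ is tree-optimal (vertex rounding does not change the optimum), $\langle\widetilde{\mathbf W},d_{T_2}\rangle=\OT_{T_2}(\tfrac1k\ones_k,\tfrac1k\ones_k)$. (iii) The quadtree optimal-transport distortion bound \citep{indyk2003fast, backurs2020scalable}, applied to the $2k$ embedding points in $\ell_1^{D'}$ with $D'=O(k(n{+}m)\log\Phi_{\mathbf C})$ and aspect ratio $\Phi_2$, gives $\E_{T_2}[\OT_{T_2}(\tfrac1k\ones_k,\tfrac1k\ones_k)]\le D_2\cdot\widehat{\mathrm{BHOT}}$, where $D_2=O(\log k\cdot\log(D'\Phi_2))$ and $\widehat{\mathrm{BHOT}}:=\min_{\mathbf W\in\U(\tfrac1k\ones_k,\tfrac1k\ones_k)}\langle\mathbf W,\widehat\D\rangle$. (iv) Writing $\mathbf W^{*}$ for the \textsc{bhot} optimizer and applying the same bound to each pair on $T_1$ gives $\E_{T_1}[\widehat d_{st}]\le D_1 d_{st}$ with $D_1=O(\log(N)\cdot\log(d\Phi_{\mathbf C}))$, so by linearity $\E_{T_1}[\widehat{\mathrm{BHOT}}]\le\E_{T_1}[\langle\mathbf W^{*},\widehat\D\rangle]\le D_1\,\mathrm{BHOT}$. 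Two Markov bounds — one on $\langle\mathbf W^{*},\widehat\D\rangle$ over $T_1$, one on $\OT_{T_2}$ over $T_2$ given $T_1$, each with slack $200$ — hold simultaneously with probability $\ge0.99$ and combine to $\langle\widetilde{\mathbf W},\D\rangle\le O(D_1D_2)\,\mathrm{BHOT}$. Note that no union bound over the $k^2$ pairs is required, since $\widehat\D$ enters only through the fixed inner products with $\mathbf W^{*}$ and $\widetilde{\mathbf W}$.

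It then remains to bound $\Phi_2$ and assemble constants. Here $\max_{s,t}\widehat d_{st}$ is at most the $\ell_1$-diameter under $T_1$, which is $O(d)$ times the root cell side, while the smallest nonzero $\widehat d_{st}$ is at least the smallest $T_1$-edge weight — the root side times $2^{-L_1}=\Omega(1/\Phi_{\mathbf C})$ — times $\min(1/n,1/m)$; together with $\log D'=O(\log N)$ this gives $\log(D'\Phi_2)=O(\log N+\log(d\Phi_{\mathbf C}))$, hence $D_1D_2=O(\log k\cdot\log N\,\log(d\Phi_{\mathbf C})\cdot(\log N+\log(d\Phi_{\mathbf C})))=O(\log^2 N\cdot\log^2(d\Phi_{\mathbf C})\cdot\log k)$, as claimed (using $\log N,\log(d\Phi_{\mathbf C})\ge1$ and absorbing $\log M$ into $\log N$). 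For the running time, Phase~1 routes each of the $N{+}M=k(n{+}m)$ ground points through the $O(\log\Phi_{\mathbf C})$ levels of $T_1$ at $O(d)$ cost per level and accumulates the sparse embeddings, for $\tilde O(kd(n{+}m)\log\Phi_{\mathbf C})$ total; Phase~2 touches only the $O(k(n{+}m)\log\Phi_{\mathbf C})$ total support of the $2k$ embedding vectors, so building $T_2$, running tree optimal transport, and vertex-rounding cost $\tilde O(k(n{+}m)\log\Phi_{\mathbf C})$ and add no further $d$ factor; the total is $\tilde O(kd(n{+}m)\log\Phi_{\mathbf C})$.

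The one non-elementary ingredient is the quadtree distortion bound in the $O(\log s\cdot\log(d\Phi))$ form rather than the crude $O(d\log\Phi)$ of a naive per-pair union bound; I would import it from \citep{indyk2003fast, backurs2020scalable, andoni2008earth} as a black box. Granting it, the remaining work is all bookkeeping for the two-level composition: verifying that $D'$ and $\Phi_2$ stay polynomial in $d,\Phi_{\mathbf C},N,M$ so the Phase-2 logarithm collapses to $O(\log N+\log(d\Phi_{\mathbf C}))$, and checking that Phase-2 vertex rounding preserves each domination inequality in step~(i) — which it does, since rounding within $\U(\tfrac1k\ones_k,\tfrac1k\ones_k)$ without increasing the linear objective $\langle\cdot,d_{T_2}\rangle$ leaves $\langle\widetilde{\mathbf W},d_{T_2}\rangle=\OT_{T_2}(\tfrac1k\ones_k,\tfrac1k\ones_k)$ intact. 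I expect this aspect-ratio/dimension bookkeeping, not the probabilistic argument, to be the most error-prone part.
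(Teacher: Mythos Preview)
Your two-phase composition and the overall bookkeeping (aspect ratio of the Phase~2 input, running time) match the paper's argument closely, and you arrive at the right final factor. But step~(i) contains a real error: randomly shifted quadtrees with side-length edge weights do \emph{not} induce dominating tree metrics over $\ell_1$. If $x,y$ lie in the same cell of side $S$, the quadtree distance is at most $2S$ (a geometric sum of the side lengths below), while $\|x-y\|_1$ can be as large as $dS$; so for $d>2$ the inequality $d_{st}\le\widehat d_{st}$ (and likewise $\widehat d_{st}\le d_{T_2}(\cdot)$) simply fails. The black-box result you cite is not a distortion bound of the form you state; what the quadtree actually gives is two separate statements: an \emph{expected} contraction $\E[T(x,y)]\le O(\log(d\Phi))\,\|x-y\|_1$, and a high-probability \emph{approximate} lower bound $T(x,y)\ge\|x-y\|_1/O(\log s)$ simultaneously for all pairs among $s$ points. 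The $\log N$ in your $D_1$ and the $\log k$ in your $D_2$ are precisely these lower-bound factors, not part of the expected stretch.

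The paper's proof therefore routes the chain differently: it uses the high-probability lower bound to get $\sum_s d_{s,\tilde\pi(s)}\le O(\log N)\sum_s\widehat d_{s,\tilde\pi(s)}$ (your step~(i) with an extra $O(\log N)$), invokes the Flowtree guarantee as a black box to pass to the $\ell_1$-optimal matching with an $O(\log k\cdot\log(D\Phi_f))$ factor, and then applies the expected upper bound (plus Markov) with factor $O(\log(d\Phi_{\mathbf C}))$ against the fixed \textsc{bhot} optimizer $\pi^*$. If you repair step~(i) to use the $O(\log N)$ and $O(\log k)$ approximate-domination factors and correct $D_1,D_2$ to just $O(\log(d\Phi_{\mathbf C}))$ and $O(\log(D'\Phi_2))$ respectively, your unrolled two-quadtree argument becomes essentially identical to the paper's.
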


We note that the running time in the above theorem is only to compute the matching $\widetilde{\mathbf{W}}$ (the ``Matching Cost'' in Table~\ref{tab:my_label}). Computing the aggregate cost $\langle \widetilde{\mathbf{W}}, \D \rangle$ then requires solving the mini-batch OT problems corresponding to each edge in the matching (``Aggr. Cost'' in Table \ref{tab:my_label}). We also consider a simpler variant of this algorithm, that uses the mean of each batch as an embedding in $\ell_2$ instead of the Quadtree-based sparse $\ell_1$ embedding.

\ifbool{istwocolumn}{
\begin{table*}
    \centering
    \begin{tabular}{rcccc}
    \toprule
        Bound Method & Batch Metric & Matching & Matching Cost & Aggr. Cost \\
    \midrule 
    \textit{Budget-Agnostic}$\quad$\\
        Naive Avg. & $\OT(\alpha^s,\beta^t)$ & Random & $0$ & $k\cdot \textup{ot}(n,m)$ \\
        \textsc{bhot} & $\OT(\alpha^s,\beta^t)$ & OT & $k^2\cdot \textup{ot}(n,m) + \textup{ot}(k,k)$ &  $k^2 \cdot O(1)$\\
        \textsc{bhot}-Means & $d(\hat{\mu}_{\alpha},\hat{\mu}_{\beta})$ & OT & $k^2 \cdot O(d(n\!+\!m)) + \textup{ot}(k,k)$ & $k\cdot \textup{ot}(n,m)$\\
        \textsc{bhot}-Bures & $\BW(\alpha^s, \beta^t)$  & OT & $k^2 \cdot O(d^3 + d^2n)+ \textup{ot}(k,k)$ & $k\cdot \textup{ot}(n,m)$\\
        \textsc{bhot}-AvgDist & $\hat{\Exp}_{ij}[d(x_i,y_j)]$  & OT & $k^2 \cdot O(dnm)+ \textup{ot}(k,k)$ & $k\cdot \textup{ot}(n,m)$\\
    \textit{Budget-Constrained}\\
        \textsc{bhot}-Greedy & $\OT(\alpha^s,\beta^t)$ & Greedy & $\binom{k}{2}\cdot \textup{ot}(n,m)$ & $k\cdot O(1)$ \\    
        \textsc{bhot}-Missing & $\OT(\alpha^s,\beta^t)$ & OT & $B \cdot \textup{ot}(n,m) + \textup{ot}(k,k)$ & $k\cdot O(1)$ \\    
        \textsc{bhot}-MissingGreedy & $\OT(\alpha^s,\beta^t)$ & OT & $B \cdot \textup{ot}(n,m) + \textup{ot}(k,k)$ & $k\cdot O(1)$ \\            
        \textsc{bhot}-Tree & L1/Means & Flowtree &  $k\cdot \tilde O(d(n+m)\log\Phi_{\mathbf{C}})$ & $k\cdot \textup{ot}(n,m)$ \\
        \textsc{bhot}-Star & L1/Means & $\OT(\alpha^s,\beta^t)$ &  $k^{1+\rho}\cdot\tilde O(d(n+m)\log\Phi_{\mathbf{C}}+\textup{ot}(n,m))+\textup{ot}(k,k)$ & $k\cdot \textup{ot}(n,m)$ \\
    \bottomrule
    \end{tabular}
    \caption{Computational complexity of various upper bound methods. Here $n,m$ are the mini-batch sizes, $d$ their dimension, $k$ is the number of batches, $B$ is a pre-specified budget on the number of OT problems to solve, and $\textup{ot}(a,b)$ is the cost of solving an OT problem between $a$ and $b$ samples, which depending on the version of OT problem and solver can range between nearly-linear to cubic complexity on $nm$ (see Section~\ref{sec:computation_ub}).}
    \label{tab:my_label}
\end{table*}
}{
\begin{table}[t]
    \centering
    \resizebox{\textwidth}{!}{\begin{tabular}{rcccc}
    \toprule
        Bound Method & Batch Metric & Matching & Matching Cost & Aggr. Cost \\
    \midrule 
    \textit{Budget-Agnostic}$\quad$\\
        Naive Avg. & $\OT(\alpha^s,\beta^t)$ & Random & $0$ & $k\cdot \textup{ot}(n,m)$ \\
        \textsc{bhot} & $\OT(\alpha^s,\beta^t)$ & OT & $k^2\cdot \textup{ot}(n,m) + \textup{ot}(k,k)$ &  $k^2 \cdot O(1)$\\
        \textsc{bhot}-Means & $d(\hat{\mu}_{\alpha},\hat{\mu}_{\beta})$ & OT & $k^2 \cdot O(d(n\!+\!m)) + \textup{ot}(k,k)$ & $k\cdot \textup{ot}(n,m)$\\
        \textsc{bhot}-Bures & $\BW(\alpha^s, \beta^t)$  & OT & $k^2 \cdot O(d^3 + d^2n)+ \textup{ot}(k,k)$ & $k\cdot \textup{ot}(n,m)$\\
        \textsc{bhot}-AvgDist & $\hat{\Exp}_{ij}[d(x_i,y_j)]$  & OT & $k^2 \cdot O(dnm)+ \textup{ot}(k,k)$ & $k\cdot \textup{ot}(n,m)$\\
    \textit{Budget-Constrained}\\
        \textsc{bhot}-Greedy & $\OT(\alpha^s,\beta^t)$ & Greedy & $\binom{k}{2}\cdot \textup{ot}(n,m)$ & $k\cdot O(1)$ \\    
        \textsc{bhot}-Missing & $\OT(\alpha^s,\beta^t)$ & OT & $B \cdot \textup{ot}(n,m) + \textup{ot}(k,k)$ & $k\cdot O(1)$ \\    
        \textsc{bhot}-MissingGreedy & $\OT(\alpha^s,\beta^t)$ & OT & $B \cdot \textup{ot}(n,m) + \textup{ot}(k,k)$ & $k\cdot O(1)$ \\            
        \textsc{bhot}-Tree & L1/Means & Flowtree &  $k\cdot \tilde O(d(n+m)\log\Phi_{\mathbf{C}})$ & $k\cdot \textup{ot}(n,m)$ \\
        \textsc{bhot}-Star & L1/Means & $\OT(\alpha^s,\beta^t)$ &  $k^{1+\rho}\cdot\tilde O(d(n+m)\log\Phi_{\mathbf{C}}+\textup{ot}(n,m))+\textup{ot}(k,k)$ & $k\cdot \textup{ot}(n,m)$ \\
    \bottomrule
    \end{tabular}}
    \caption{Computational complexity of various upper bound methods. Here $n,m$ are the mini-batch sizes, $d$ their dimension, $k$ is the number of batches, $B$ is a pre-specified budget on the number of OT problems to solve, and $\textup{ot}(a,b)$ is the cost of solving an OT problem between $a$ and $b$ samples, which depending on the version of OT problem and solver can range between nearly-linear to cubic complexity on $nm$ (see Section~\ref{sec:computation_ub}).}
    \label{tab:my_label}
\end{table}
}

In our implementation, for simplicity, we replace the sparse $\ell_1$-embedding in the first phase, with a simpler variant that embeds each batch into $\R^d$ as the mean of the points in that batch. Furthermore, in the second phase, we use the tree constructed by Flowtree to compute additional edges on top of the matching it returns, thus extending its usability to budgets larger than $k$. 

\paragraph{Star-based matching.} The above \textsc{bhot}-Tree algorithm computes $k$ OT problems to obtain a polylogarithmic approximation for BHOT. 
Even though, as mentioned, we can (and will) use the tree to choose more than $k$ OT problems to fully compute, the approximation guarantee does not improve. 
We wish to extrapolate from this and obtain an algorithm that computes more than $k$ (but less than $k^2$) OT problems, and in return achieves a more accurate approximation. To this end, we replace the Flowtree-based second phase of BHOT-Tree with a sparse graph composed of a collection of stars \citep{har2013euclidean,carey2022stars}. The idea is to apply a hierarchy of gradually refined locality-preserving partitions (such as a Quadtree) and then pick a random point in each part of the partition and connect it with an edge to each other point in that part---thus forming a star-shaped subgraph. The edges of all stars thus added are the entries $\mathbf{C}^{s,t}$ of the batch-cost matrix that we compute to approximate BHOT. To adhere to the bipartite nature of the BHOT problem and avoid ``wasting'' OT computations between batches on the same side, we only star edges that connect a batch of $\alpha_n$ to a batch of $\beta_m$. The resulting algorithm has the following guarantee, offering a different efficiency to accuracy trade-off than BHOT-Tree.

\begin{theorem}[\textsc{bhot}-Star Guarantee]\label{thm:stars}
Let $\Phi_{\mathbf{C}}$ be the aspect ratio of the full-sample cost matrix $\mathbf{C}_{ij}$. 
Suppose the costs are given by $\ell_1$-distances ($\mathbf C_{i,j}=c(X_i,Y_j)=\norm{X_i-Y_j}_1$).
Let $\rho\in(0,1)$. 
\textsc{bhot}-Star runs in time $\tilde O((k^{1+\rho}+d)(n+m)\log\Phi_{\mathbf{C}}+k^{1+\rho}\cdot\textup{ot}(n,m)+\textup{ot}(k,k)$ and computes a matching $\widetilde{\mathbf{W}}$ that with probability 
$0.99$ satisfies,
\[ \mathrm{BHOT} \leq \langle \widetilde{\mathbf{W}}, \D \rangle \leq O(\rho^{-1}\log(N) \cdot\log(d\Phi_{\mathbf{C}}))\cdot\mathrm{BHOT} .\]
\end{theorem}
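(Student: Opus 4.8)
The plan is to reuse the reduction already assembled in the excerpt and then append the two standard approximation layers that \textsc{bhot}-Tree uses --- a Quadtree OT embedding and, in place of Flowtree, the sparse ``stars'' spanner. Under Assumption~1, Theorem~\ref{thm:blockwise_linear_primal} together with the meta-problem \plaineqref{eq:primal_prob_meta} identifies $\mathrm{BHOT}$ with the optimal value of an OT problem between the uniform measures on the $k$ batches of $\X$ and the $k$ batches of $\Y$ under the ground cost $d_{st}=\OT(\alpha^s,\beta^t)$, and by \citet[Prop.~2.1]{peyre2019computational} an optimal coupling may be taken as $\tfrac1k$ times a permutation, i.e.\ a perfect matching $\sigma^\star$ with $\mathrm{BHOT}=\tfrac1k\sum_{s}d_{s,\sigma^\star(s)}$. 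The algorithm outputs $\langle\widetilde{\mathbf{W}},\D\rangle$, which is this same value with the coupling's support confined to the edge set $E(H)$ of the star graph $H$. To be safe I would also have the algorithm solve the $k$ diagonal problems $\OT(\alpha^s,\beta^s)$ and add those edges (an $O(k\cdot\textup{ot}(n,m))$ overhead, absorbed into the running time): this makes the restricted meta-problem feasible and, being a transportation LP with integral marginals, gives it a matching-valued optimum, so $\widetilde{\mathbf{W}}$ is well-defined and is a matching. Since any coupling supported on $E(H)$ is feasible for \plaineqref{eq:primal_prob_meta}, the lower bound $\mathrm{BHOT}\le\langle\widetilde{\mathbf{W}},\D\rangle$ is immediate.

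For the upper bound I would bound two sources of slack separately. First, let $Q$ be the randomly shifted Quadtree over the $N+M$ ground points underlying the embedding phase, let $\phi_Q$ be the induced sparse $\ell_1$ embedding of measures, and set $\widehat{d}_{st}=\norm{\phi_Q(\alpha^s)-\phi_Q(\beta^t)}_1$, so that $H$ is built over the $2k$ embedded points $\{\phi_Q(\alpha^s)\}\cup\{\phi_Q(\beta^t)\}$. The classical analysis of Quadtree OT embeddings under $\ell_1$ costs \citep{charikar2002similarity,indyk2003fast,backurs2020scalable} gives, for every fixed pair $(s,t)$, the pointwise domination $d_{st}\le\widehat{d}_{st}$ (the Quadtree metric dominates $\ell_1$, hence tree-$W_1$ dominates $W_1=\OT$) together with the expected distortion $\E_Q[\widehat{d}_{st}]\le D_{\mathrm{QT}}\cdot d_{st}$ for $D_{\mathrm{QT}}=O(\log N\cdot\log(d\Phi_{\mathbf{C}}))$. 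Second, the stars construction of \citet{har2013euclidean,carey2022stars} run with parameter $\rho$ produces a bipartite $H$ (only edges between an $\alpha$-batch and a $\beta$-batch are kept) with $\tilde{O}(k^{1+\rho})$ edges that is a $t_\rho$-spanner with $t_\rho=O(\rho^{-1})$; in particular, for any two marginals on its vertex set, the minimum transportation cost over couplings supported on $E(H)$ is at most $t_\rho$ times the unrestricted minimum, with respect to the $\ell_1$/$\widehat{d}$ cost.

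Putting the layers together, write $\widehat{\D}$ for the matrix $(\widehat{d}_{st})$. Then $\langle\widetilde{\mathbf{W}},\D\rangle$, the minimum of $\langle\cdot,\D\rangle$ over $E(H)$-supported couplings, is at most the minimum of $\langle\cdot,\widehat{\D}\rangle$ over the same set (since $d_{st}\le\widehat{d}_{st}$ entrywise), which by the spanner property is at most $t_\rho$ times the unrestricted minimum of $\langle\cdot,\widehat{\D}\rangle$, which in turn is at most $\tfrac1k\sum_s\widehat{d}_{s,\sigma^\star(s)}$ because the fixed (hence $Q$-independent) matching $\sigma^\star$ is itself a feasible coupling. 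Taking $\E_Q$ and applying the pointwise distortion bound to the $k$ edges of $\sigma^\star$,
\[ \E_Q\bigl[\langle\widetilde{\mathbf{W}},\D\rangle\bigr]\ \le\ \frac{t_\rho}{k}\sum_{s=1}^{k}\E_Q\bigl[\widehat{d}_{s,\sigma^\star(s)}\bigr]\ \le\ \frac{t_\rho\,D_{\mathrm{QT}}}{k}\sum_{s=1}^{k}d_{s,\sigma^\star(s)}\ =\ t_\rho\,D_{\mathrm{QT}}\cdot\mathrm{BHOT}. \]
Markov's inequality plus the high-probability correctness of the star construction (with constants adjusted) then yields $\langle\widetilde{\mathbf{W}},\D\rangle\le O(t_\rho D_{\mathrm{QT}})\cdot\mathrm{BHOT}=O(\rho^{-1}\log N\cdot\log(d\Phi_{\mathbf{C}}))\cdot\mathrm{BHOT}$ with probability $0.99$. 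The running time is routine accounting: constructing $Q$ and the $2k$ sparse batch embeddings, performing the $\tilde{O}(k^{1+\rho})$ sparse-vector distance evaluations that build $H$, solving one mini-batch problem $\textup{ot}(n,m)$ per edge of $H$, and solving the final $k\times k$ meta-problem $\textup{ot}(k,k)$, which gives the stated bound.

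The step I expect to be the crux is the second layer: certifying that confining the transport plan to the $O(\rho^{-1})$-stretch star graph inflates the optimal transportation cost by only the stretch factor, while still producing an explicit, computable matching in the bipartite OT setting. This is precisely what the hierarchical-partition/stars machinery is designed to deliver, so the work is in correctly invoking (or, if no sufficiently black-box statement is available, re-deriving) that guarantee; the remaining ingredients --- the reduction to a batch matching, the Quadtree OT distortion bound, the monotonicity argument $d\le\widehat{d}$, and the Markov step --- are either already in the excerpt or entirely routine.
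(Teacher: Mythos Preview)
Your overall two-layer strategy---quadtree OT embedding followed by the stars spanner---is exactly the paper's approach, and you are right that the spanner-to-matching inflation is the crux (the paper also dispatches it by citation to \citet{har2013euclidean}). However, the quadtree layer is mis-stated in a way that makes one step fail. You assert that the quadtree metric pointwise dominates $\ell_1$, hence $d_{st}\le\widehat{d}_{st}$ deterministically for every pair and every random shift. This is false for the side-length-weighted quadtree used here and in the cited references: two points lying in a common cell of side $s$ have tree distance $O(s)$ but can have $\ell_1$ distance up to $d\cdot s$, so in dimension $d\gg1$ the tree metric can be the smaller one. The actual guarantee (\Cref{cor:quadtreedistance}) is only that $d_{st}\le O(\log N)\cdot\widehat{d}_{st}$ holds with high probability simultaneously for all pairs, while the expected upper distortion is $\E_Q[\widehat{d}_{st}]\le O(\log(d\Phi_{\mathbf C}))\cdot d_{st}$---\emph{without} the extra $\log N$ factor you fold into $D_{\mathrm{QT}}$.

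Your two mis-stated bounds happen to cancel in the final product, so the target approximation factor comes out right, but the argument as written does not go through: the inequality ``$\langle\widetilde{\mathbf W},\D\rangle\le\min_{E(H)}\langle\cdot,\widehat\D\rangle$'' is unjustified without inserting an $O(\log N)$ factor at that point, and that factor must then be dropped from $D_{\mathrm{QT}}$ to avoid double-counting. With this correction---use item~2 of \Cref{cor:quadtreedistance} to pass from $\D$ to $\widehat\D$, and item~1 together with Markov's inequality for the reverse direction---your chain of inequalities becomes exactly the paper's.
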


Algorithm~\ref{algo:bhot_ub_constrained} shows generic pseudocode for all the budget-constrained methods described so far, whereby the  $\texttt{GetMatching}$ function is method-specific.

\paragraph{\textsc{bhot} with approximate batch metrics.} In addition, we consider the following budget-agnostic bounds, which reduce computational complexity by approximating the batch-to-batch OT distance by cheaper proxy metrics. After solving the meta OT problem using these proxy costs, they solve exact OT problems only on the $k$ matched pairs obtained from the optimal coupling, and their values are averaged as a final step. For samples with empirical means $(\mu_\alpha, \mu_\beta)$ and covariances $(\Sigma_\alpha, \Sigma_\beta)$, we consider the following proxy metrics:
\begin{itemize}[topsep=0pt,parsep=3pt,partopsep=0pt,leftmargin=*]
    \item Distance between means: $\| \mu_{\alpha}- \mu_{\beta}\|_2$ (\textsc{bhot}-Means).
    \item Expected distance: $\Exp_{x,y} \| x - y\|_2$ (\textsc{bhot}-AvgDist).
    \item Bures-Wasserstein distance \citep{gelbrich1990formula, bhatia2019bures} (\textsc{bhot}-Bures):  
    \[\textup{BW}^2(\alpha, \beta)\!=\!\|\mu_\alpha - \mu_\beta\|_2^2 + \tr(\Sigma_{\alpha} + \Sigma_{\beta} -2 (\Sigma_{\alpha}^{\tfrac{1}{2}} \Sigma_\beta \Sigma_{\alpha}^{\tfrac{1}{2}})).\]
\end{itemize}
Here too, as in the Tree algorithm, our implementation uses the simpler mean-embedding of the batches instead of the sparse $\ell_1$-embedding.

Algorithm~\ref{algo:bhot_ub_proxy} (in Appendix~\cref{sec:further_methods}) provides pseudocode for these three methods.

\begin{figure*}
    \centering
    \includegraphics[width=\linewidth]{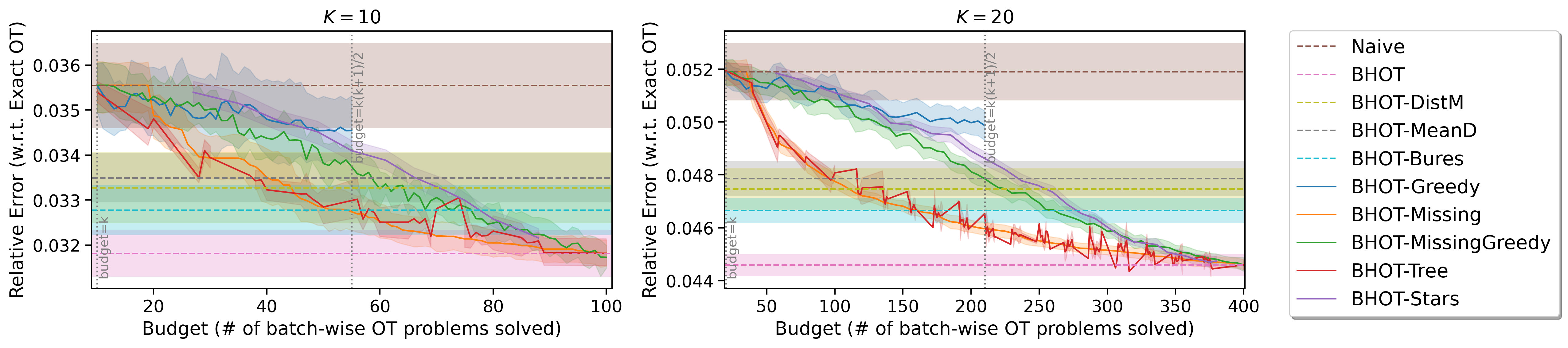}
    \caption{\textbf{Bound error vs.\ compute budget on MNIST vs.\ USPS}. For a given budget of batch-wise OT problems solved, we show the relative error in the bound obtained with each method, averaged across 10 repetitions. Unlike the budget-agnostic methods (dashed lines), the budget-constrained ones (solid) allow for fine-tuned bound tightness selection.}
    \label{fig:budget_vs_bound}
\end{figure*}

\section{Experiments}

\subsection{Data and setup}

For our first set of experiments, we compare various bounds on the OT distance between the \mnist and \usps datasets, the latter re-scaled to the $28\times28$ pixel size of the former to allow for direct comparison. To facilitate repeated computation of the various bounds across a spectrum of budgets, we take subsets of size $N=M=1000$ for each dataset. In all cases, we use the un-regularized $\OT$ distance as the batch-to-batch distance function, with the euclidean $\ell_2$ distance between images as ground metric. We use the Python Optimal Transport library \citep{flamary2021pot} to solve the OT sub-problems.

\subsection{Bound tightness vs.~computational budget}

We estimate the OT distance between the \mnist and \usps datasets using the various bounds proposed in this work (c.f.~Table~\ref{tab:my_label}). We take subsets of size $N=M=1000$ and vary the number of batches $k$, resulting in mini-batches of size $n=m=M/k$. For budget-constrained methods, we vary the budget between a lower limit of $k$ (what is used by the naive averaging baseline) and an upper limit of $k^2$ (what is used by the best-in-class \textsc{bhot} bound). We show the results for $k=\{10,20\}$ in Figure~\ref{fig:budget_vs_bound} and provide additional results in the Appendix. In these, we plot the relative error of the bounds with respect to the full-sample OT distance (i.e., the solution of \plaineqref{eq:primal_prob_full}), noting that budget-agnostic methods (dashed lines) appear as constant horizontal lines for comparison. \looseness=-1

The results confirm that our budget-constrained bounds smoothly interpolate between the usual naive averaging of batch solutions (dashed brown line in Fig.~\ref{fig:budget_vs_bound}) and the best-in-class \textsc{bhot} bound (pink, dashed). In this setting, the versions with missing costs and Flowtree approximation exhibit an overall superior cost-vs-tightness trade-off curve, particularly in the the smaller batch-size regime ($k=20$). Interestingly, the proxy-cost baselines ($\textsc{bhot-}[\textup{Bures}|\textup{MeanD}])$ are surprisingly tight, and in fact surpass the budget-constrained methods in the very-low-budget regime. This suggests that for this dataset, the first and second order moments of the per-batch distributions capture sufficient information to adequately approximate the OT distance between them. \looseness=-1

\subsection{Drift detection via two-sample tests}\label{sec:exp_twosample_testing}

Next, we investigate our methods in the context of distributional drift detection. We simulate a drift in \mnist by generating copies of this dataset where every image has been rotated $\theta$ degrees, with values of $\theta$ in $\{-4^\circ.,\dots,4^\circ\}$. Example images are provided in Appendix~\cref{sec:exp_details}. For a given rotated dataset, we investigate whether each method can distinguish it from the original version in a two-sample test. To obtain a statistical significance, we use permutation tests \citep{good2013permutation,ilmun2022minimax}, i.e., repeatedly mixing and shuffling the datasets and then comparing the distance between the original datasets and the random splits of these mixed datasets (details provided in Appendix~\ref{sec:exp_details}).

Figure~\ref{fig:alpha_power} shows the power of the two-sample test as a function of the rejection threshold $\alpha$ on the p-value, for two different degrees of rotation, using $k=20$ batches. \textsc{bhot}, \textsc{bhot}-Missing and \textsc{bhot}-Tree are significantly more sensitive to the drift, as shown by their strictly dominating power curves. For a fixed value of $\alpha=0.05$, Figure~\ref{fig:power_angle} shows the aggregated rejection rate of the tests for the entire range of rotations. While no method yields a test capable of distinguishing the samples up to rotations of $\pm2^{\circ}$, most quickly acquire specificity for larger distortions, although the naive and \textsc{bhot}-DistM methods remain at low rejection rates. \looseness=-1

\ifbool{istwocolumn}{
\begin{figure}
    \centering
    \includegraphics[width=\linewidth]{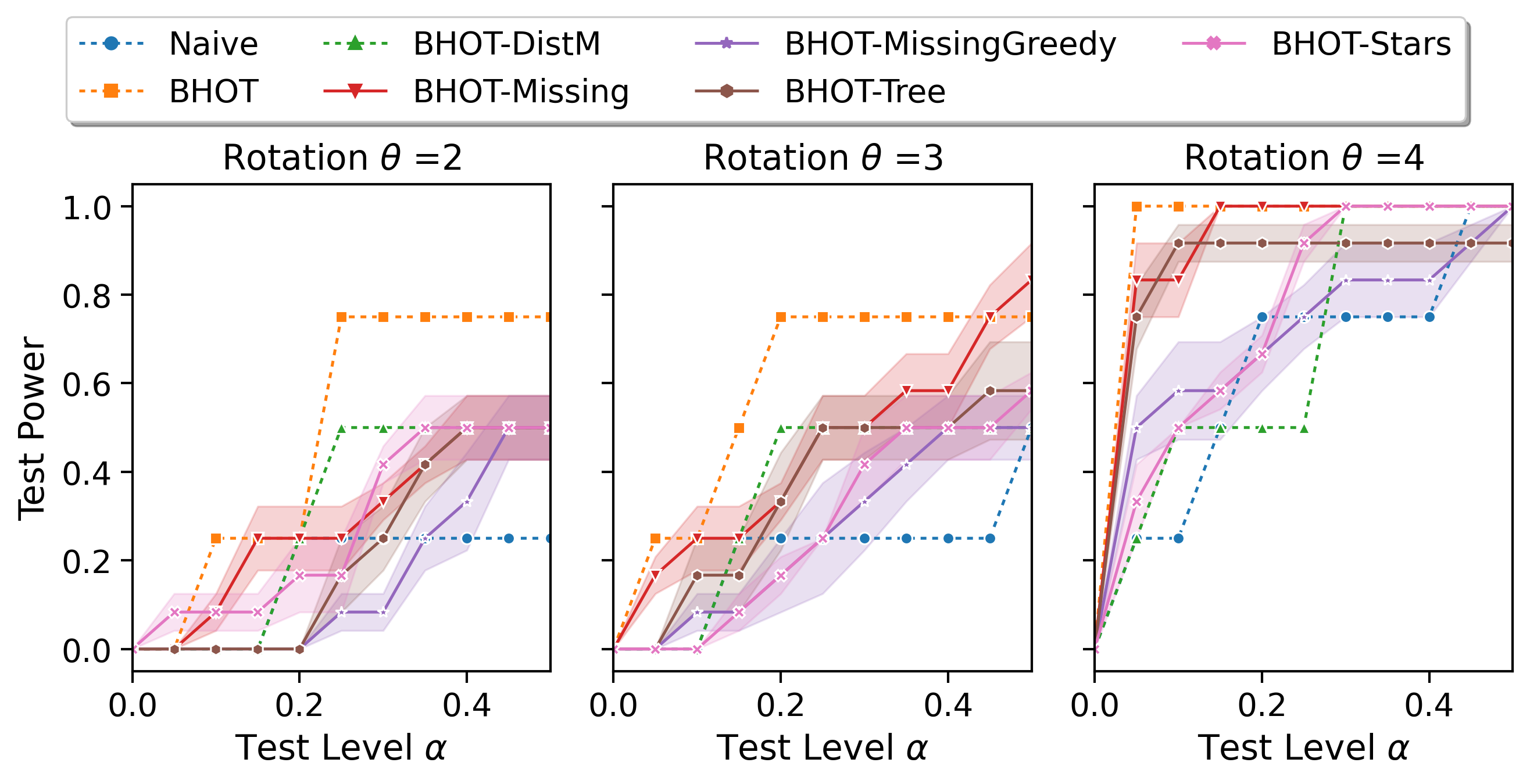}
    \caption{\textbf{Test power on MNIST vs.~Rotations}. 
    As the degree of rotation increases, the power of the two-sample tests based on all the bounds improves, although some (including naive averaging) do so more slowly. 
    }
    \label{fig:alpha_power}
\end{figure}
\begin{figure}
    \centering
    \includegraphics[width=\linewidth]{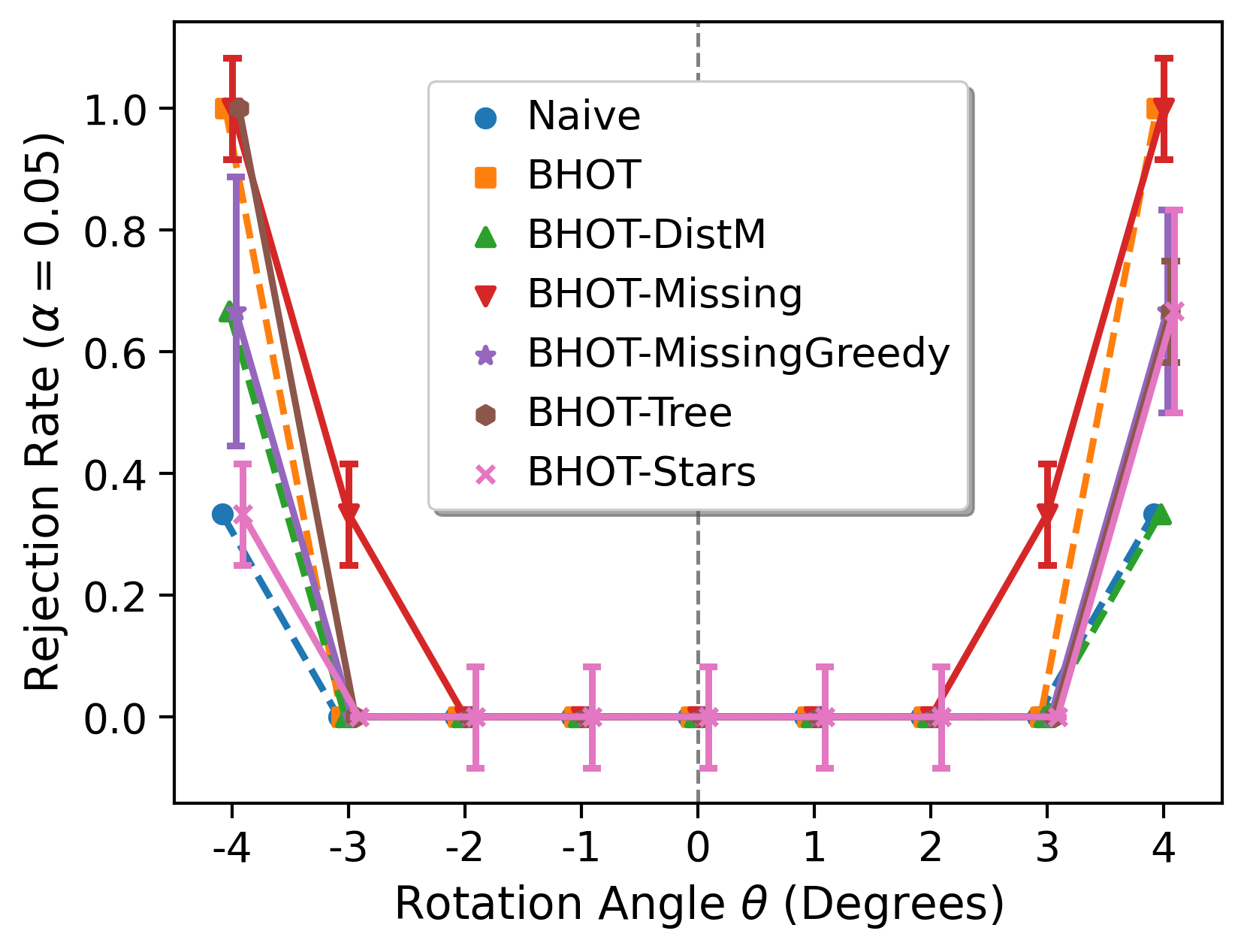}
    \caption{\textbf{Test rejection rate on MNIST vs rotations}. The two-sample test rejection rate for various bounding methods as a function of the degree of rotation. Most methods, although not naive averaging, can reliably distinguish the original and rotated datasets at $\theta\geq 4^\circ$ rotation angle.}
    \vspace{-0.2cm}
    \label{fig:power_angle}
\end{figure}
}{
\begin{figure}
\centering
\begin{minipage}{.62\textwidth}
    \vspace{-0.2cm}
  \centering
    \includegraphics[width=\linewidth]{figs/alpha_vs_power_threepanes.png}
  \captionof{figure}{\textbf{Test power on MNIST vs.~rotations}. 
    As the degree of rotation increases, the power of the two-sample tests based on all the bounds improves, although some (including naive averaging) do so more slowly. Some of the budget-constrained bounds yield tests roughly as powerful as that of the best-in-class \textsc{bhot} that require solving only a subset of the mini-batch OT problems. 
    }
  \label{fig:alpha_power}
\end{minipage}%
\hfill
\begin{minipage}{.35\textwidth}
  \centering
    \includegraphics[width=\linewidth]{figs/angle_vs_rejection.png}
  \captionof{figure}{\textbf{Test rejection rate on MNIST vs.\ rotations}. The two-sample test rejection rate for various bounding methods as a function of the degree of rotation. Most methods, although not naive averaging, can reliably distinguish the original and rotated datasets at $\theta\geq 4^\circ$ rotation angle.}
\label{fig:power_angle}
\end{minipage}
\end{figure}

}

\section{Discussion}

We have presented a family of bounds for the optimal transport problem that require solving only smaller problems between mini-batches. These bounds allow for trading off bound tightness for computational efficiency, and some of them come with provable guarantees. Although we have focused on upper bounds, we have also shown that a similar approach can be used to obtain analogous lower bounds. Our results suggest that the methods based on flowtree approximation and OT with missing costs tend to yield tighter bounds for every budget regime, but it is an interesting question for future work whether this trend is preserved for different datasets. Furthermore, the family of bounds considered here is certainly not the only one that can be constructed from solutions to sub-problems. It is left as an open question whether other families of bounds could provide better---perhaps even Pareto-optimal---budget-tightness trade-offs. 

\printbibliography

\opt{arxiv}{
    \pagebreak
}

\opt{aistats}{
    \pagebreak
    \clearpage
    \onecolumn
}

\appendix

\section{Lower Bounds via Dual Solutions}\label{sec:lower_bounds}

The dual of the full-sample OT problem \plaineqref{eq:primal_prob_full} between empirical measures $\alpha_N$ and $\beta_M$ is 
\begin{equation}\label{eq:dual_problem_full}
   \sup_{(\f, \g) \in \mathcal{R}(\mathbf{C})}  \langle \f, \a \rangle +  \langle \g, \b \rangle \tag{FD}
\end{equation}
where the supremum is taken over the set of feasible potentials: $\mathcal{R}(\mathbf{C})= \{ (\f,\g) \in \R^{N\times M} \st \f \oplus \g \leq \mathbf{C} \}$. Using the block structure induced by the partition into $k$ mini-batches of each sample, we can equivalently write \plaineqref{eq:dual_problem_full} as
\begin{equation}\label{eq:dual_problem_full_batch}
    \sup_{\{\f^s\}, \{\g^t\} } \sum_{s=1}^k  \langle \f^s, \a^s \rangle +  \sum_{t=1}^k \langle \g^t, \b^t \rangle \qquad \text{subject to} \qquad \f^s \oplus \g^t \leq \mathbf{C}^{st} \quad \forall s,t \in \llbracket k \rrbracket.  \tag{FD'}
\end{equation}
In addition, the partitioning also defines $k^2$ individual mini-batch problems (the duals of the \plaineqref{eq:primal_prob_batch} problems): 
\begin{equation}\label{eq:dual_problem_batch}
   \sup_{(\f^{st}, \g^{st}) \in \mathcal{R}(\mathbf{C}^{st})}  \langle \f^{st}, \a^s \rangle +  \langle \g^{st}, \b^t \rangle, %
   \tag{BD}
\end{equation}
which, analogously as for the primal, differ from \plaineqref{eq:dual_problem_full_batch} in that they are a collection of decoupled problems, while the latter is a single coupled problem. However, an important difference with the primal formulation is that here problem \plaineqref{eq:dual_problem_full_batch} involves a total of $2k$ potentials, while the problems \plaineqref{eq:dual_problem_batch} involve a total of $2k^2$ potentials. This seemingly subtle discrepancy will become prove crucial below.

Let $(\f^{st}_*, \g^{st}_*)$ denote the optimal pair for each problem \plaineqref{eq:dual_problem_batch}. Our goal is to construct a feasible solution to \plaineqref{eq:dual_problem_full_batch} using these mini-batch-wise solutions. Because of the discrepancy described above, for a given `block' of the full problem potential $\f^s$ we have multiple candidates (namely any $\f^{st}_*$ for a fixed $s$), and analogously for $\g^t$. Again, this is in contrast to the primal case studied before, where there is a unique correspondence between blocks of the full-sample solution and the mini-batch optimal solutions. The next theorem proposes one possible way to resolve this multiplicity and construct solutions to the full-sample problem using the mini-batch ones. 

\begin{theorem}\label{thm:additive_diagonal_dual}
    Under Assumption 1, let $\{\f^{st}_*, \g^{st}_*\}_{s,t=1}^{k,k}$ be pairs of optimal solutions to the batch-wise dual problems \plaineqref{eq:dual_problem_batch}. We define\footnote{Equivalently, $\tilde{\f} = \text{vec}(\F + \ones_n \otimes \u^T)$, where the columns of $\F$ are $\f^{ss}_*$, and analogously for $\tilde{\g}$.} $\tilde{\f}\in \R^N,\tilde{\g} \in \R^M$ block-wise as:
    \vspace{-0.1cm}
    \begin{align*}
       \forall s \in \llbracket k \rrbracket: \medspace &[\tilde{\f}_i]_{i \in \mathrm{B}_x^s} = \f^{ss}_* + u_s \ones_n, \\
       \forall t \in \llbracket k \rrbracket: \medspace &[\tilde{\g}_j]_{j \in \mathrm{B}_y^t} = \g^{tt}_* + v_t \ones_m   
    \end{align*}
    for some values  $\u = (u_1, \dots, u_k)$ and $\v =(v_1, \dots, v_k)$. If $\u$ and $\v$ satisfy $ \u \oplus \v \leq \mathbf{K}$, where $\mathbf{K}$ is a $k$-by-$k$ matrix with entries $K_{s,t} = \min_{j}[\g^{st}_* - \g^{tt}_*]_{j} -  \max_{i}[\f^{ss}_* - \f^{st}_*]_{i}$, then $(\tilde{\f}, \tilde{\g})$ is a feasible pair for \plaineqref{eq:dual_problem_full} with objective value:
    \begin{equation}\label{eq:dual_reduction_obj}
        \langle \tilde{\f}, \a \rangle+\langle \tilde{\g}, \b \rangle =  \langle \u , \tilde{\a} \rangle+\langle \v, \tilde{\b} \rangle +\sum_{s=1}^k	\tfrac{1}{k} \OT(\alpha^s,\beta^s), 
    \end{equation}
    where $\tilde{\a},\tilde{\b}$ are the vectors of aggregated mini-batch masses defined in Section~\ref{sec:partitioning}.
\end{theorem}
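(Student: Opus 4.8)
The plan is to prove both assertions—feasibility of $(\tilde{\f},\tilde{\g})$ for \plaineqref{eq:dual_problem_full} and the claimed value of its dual objective—by working in the block form \plaineqref{eq:dual_problem_full_batch} and using only (i) feasibility of the mini-batch dual pairs $(\f^{st}_*,\g^{st}_*)$ for \plaineqref{eq:dual_problem_batch}, and (ii) strong duality for each diagonal mini-batch problem. Everything reduces to careful bookkeeping of local-versus-global indices and of the mass normalization forced by Assumption 1.

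\textbf{Feasibility.} The constraint $\tilde{\f}\oplus\tilde{\g}\le\mathbf C$ decomposes into the block constraints $[\tilde{\f}_i]_{i\in\mathrm B^s_x}\oplus[\tilde{\g}_j]_{j\in\mathrm B^t_y}\le\mathbf C^{st}$ for all $s,t$, i.e. $(\f^{ss}_*)_i+u_s+(\g^{tt}_*)_j+v_t\le\mathbf C^{st}_{ij}$ for all local indices $i\in\llbracket n\rrbracket,\,j\in\llbracket m\rrbracket$. Since $(\f^{st}_*,\g^{st}_*)\in\mathcal R(\mathbf C^{st})$ gives $(\f^{st}_*)_i+(\g^{st}_*)_j\le\mathbf C^{st}_{ij}$, it suffices to show $(\f^{ss}_*)_i+u_s+(\g^{tt}_*)_j+v_t\le(\f^{st}_*)_i+(\g^{st}_*)_j$, i.e. $u_s+v_t\le[(\g^{st}_*)_j-(\g^{tt}_*)_j]+[(\f^{st}_*)_i-(\f^{ss}_*)_i]$. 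Requiring this for every $i$ and every $j$, and minimizing over $j$ while maximizing over $i$ on the right-hand side, gives exactly $u_s+v_t\le K_{s,t}$ with $K_{s,t}=\min_j[\g^{st}_*-\g^{tt}_*]_j-\max_i[\f^{ss}_*-\f^{st}_*]_i$. This is precisely the hypothesis $\u\oplus\v\le\mathbf K$. The one subtlety—and the only place the exact form of $\mathbf K$ matters—is that the $i$ and $j$ extrema must be taken separately because the two bracketed terms depend on disjoint index sets.

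\textbf{Objective value.} Expanding $\langle\tilde{\f},\a\rangle=\sum_{s}\sum_{i\in\mathrm B^s_x}\a_i\tilde{\f}_i$ over blocks, and using that under Assumption 1 one has $m(\tilde\alpha^s)=1/k$, hence the full-sample weight of the $i$-th point of batch $s$ equals $\a_{\sigma_x(i,s)}=\tfrac1k\a^s_i$, each block contributes $\tfrac1k\sum_i\a^s_i\big((\f^{ss}_*)_i+u_s\big)=\tfrac1k\big(\langle\f^{ss}_*,\a^s\rangle+u_s\big)$ because $\a^s\in\Delta_n$. Summing over $s$ and using $\tilde{\a}=\tfrac1k\ones_k$ gives $\langle\tilde{\f},\a\rangle=\langle\u,\tilde{\a}\rangle+\tfrac1k\sum_{s}\langle\f^{ss}_*,\a^s\rangle$, and symmetrically $\langle\tilde{\g},\b\rangle=\langle\v,\tilde{\b}\rangle+\tfrac1k\sum_{s}\langle\g^{ss}_*,\b^s\rangle$. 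Adding, and invoking strong duality for the $s$-th diagonal mini-batch problem, $\langle\f^{ss}_*,\a^s\rangle+\langle\g^{ss}_*,\b^s\rangle=\OT(\alpha^s,\beta^s)$, yields \plaineqref{eq:dual_reduction_obj}. Since the computations are routine, I expect no real obstacle—only the need to keep straight the factor $1/k$ from Assumption 1 and the translation between $\tilde{\f}\oplus\tilde{\g}\le\mathbf C$ and its blockwise reformulation.
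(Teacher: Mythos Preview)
Your proof is correct and follows essentially the same approach as the paper: both reduce feasibility of $\tilde{\f}\oplus\tilde{\g}\le\mathbf C$ to the block constraints, use feasibility of $(\f^{st}_*,\g^{st}_*)$ for $\mathbf C^{st}$, and show that $u_s+v_t\le K_{s,t}$ is exactly the condition needed (the paper packages the min/max step as a separate lemma about $\oplus$, whereas you do it inline). The objective-value computation is likewise identical, invoking Assumption~1 for the $1/k$ factors and strong duality for the diagonal mini-batch problems.
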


The feasible solutions $(\tilde{\f},\tilde{\g})$ characterized by Theorem~\ref{thm:additive_diagonal_dual} define a family of lower bounds for \plaineqref{eq:dual_problem_full} parametrized by $\u$ and $\v$. The tightest bound in this family can be found by solving a \textit{meta-problem} between mini-batches:
\begin{equation}\label{eq:dual_problem_meta}
	   \sup_{(\u, \v) \in \R^k \times \R^k } \langle \u , \p \rangle + \langle \v, \q \rangle \quad \text{s.t.} \quad \u \oplus \v \leq \mathbf{K}  \tag{MD}
\end{equation}
where $\K$ is defined in Theorem~\ref{thm:additive_diagonal_dual}. This meta-problem is (the dual of) a $k$-by-$k$ Optimal Transport problem, analogous to \plaineqref{eq:primal_prob_meta}, whose solution immediately yields the desired bound (via~\plaineqref{eq:dual_reduction_obj}) without having to explicit construct the full $\tilde{\f}$ or $\tilde{\g}$. It is also worth noting that although Theorem~\ref{thm:additive_diagonal_dual} defines these candidate dual solutions using the optimal solutions to the problems along the diagonal $\f^{ss}_*$ and $\g^{tt}_*$,  any other $\f^{(s,s')}_*, s\neq s'$ and $\g^{(t',t)}_*, t \neq t'$ could have been used as template. This would naturally change the form of the last term in~\plaineqref{eq:dual_reduction_obj}. We leave the optimization of this choice for future work.

\pagebreak
\clearpage

\section{Proofs}

\ifbool{istwocolumn}{
\begin{table}
    \centering
    \begin{tabular}{@{}cp{12cm}@{}}
        \toprule
        Symbol &  Description \\
        \midrule
        \textit{Full Problem} \plaineqref{eq:primal_prob_full}: \hspace{1.4cm} \\
        $\alpha_N, \beta_M$ & Empirical distributions $\alpha_N = \sum_{i=1}^N a_i \delta_{\rvx^i} , \beta_M = \sum_{j=1}^M b_j \delta_{\rvy^j}$ \\ 
        $\a \in \Delta_N, \b \in \Delta_M $ & Probability histograms of empirical distributions $\alpha_N$ and $\beta_M$ \\
        $\X \in \R^{N\times d}, \Y\in \R^{M\times d}$ & Dataset matrices of empirical distributions $\alpha_N$ and $\beta_M$, i.e., $\X_{i,:} = \rvx^i, \Y_{j,:} = \rvy^j$ \\[.5em]
        \textit{Mini-Batch Problems}  (\ref{eq:primal_prob_full_blockified},~\ref{eq:primal_prob_batch}): \\
        $\mathrm{B}^s_x \in \wp(\llbracket N \rrbracket), \mathrm{B}^t_y \in \wp(\llbracket M \rrbracket)$ & Subsets of indices defining mini-batch $s$ of $\alpha_N$ and mini-batch $t$ of $\beta_M$ \\
        $\X^s \in \R^{n\times d}, \Y^t\in \R^{m\times d}$ & Mini-batch matrices: $\X^s = [\X_{i,:}]_{i \in  \mathrm{B}^s_x}$, $\Y^t = [\Y_{j,:}]_{j \in  \mathrm{B}^t_y}$ \\
        $\hat{\a}^s \in \R^n_+, \hat{\b}^t \in \R^M_+$ & Per-batch un-normalized histograms, i.e., $\hat{\a}^s = [a_i]_{i \in \mathrm{B}^s_x}, \hat{\b}^t = [b_j]_{j \in \mathrm{B}^t_y}$\\
        $\a^s \in \Delta_n, \b^t \in \Delta_m$ & Per-batch probability (i.e., normalized) histograms $\a^s = \frac{\hat{\a}^s}{\sum \hat{\a}^s}, \b^t = \frac{\hat{\b}^t}{\sum \hat{\b}^t}$\\
        $\tilde{\alpha}^s, \tilde{\beta}_t$ & Unnormalized per-batch distributions $\tilde{\alpha}^s = \sum_{i \in \mathrm{B}^s_x}  a_i \delta_{\rvx^i} , \tilde{\beta}^t = \sum_{j \in \mathrm{B}^t_y}  b_j \delta_{\rvy^j} $ \\ 
        $\alpha^s, \beta_t$ & Normalized per-batch distributions $\alpha^s = \sum_{i =1}^n a^s_{i} \delta_{\rvx^{\sigma(i)}} , \beta^t = \sum_{j=1}^m b_j \delta_{\rvy^{\sigma(j)}} $ \\[.5em]
        \textit{Meta Problem} \plaineqref{eq:primal_prob_meta}: \hspace{1.3cm} \\
        $\tilde{\a}^s \in \Delta_k, \tilde{\b}^t \in \Delta_k$ & Probability histograms over mini-batches (uniform under Assumption 1)  \\
        $\D \in \R^{k\times k}$ & Matrix of transport costs between mini-batches, $D_{st} = \OT(\alpha^s, \beta^t)$ \\
        $\W \in \R^{k\times k }_+ $ & Transport plan between mini-batches  \\
        \bottomrule
    \end{tabular}
    \caption{Glossary of notation and objects used in this manuscript. }
    \label{tab:glossary}
\end{table}
}{
\begin{table}
    \centering
    \resizebox{\textwidth}{!}{\begin{tabular}{@{}cp{12cm}@{}}
        \toprule
        Symbol &  Description \\
        \midrule
        \textit{Full Problem} \plaineqref{eq:primal_prob_full}: \hspace{1.4cm} \\
        $\alpha_N, \beta_M$ & Empirical distributions $\alpha_N = \sum_{i=1}^N a_i \delta_{\rvx^i} , \beta_M = \sum_{j=1}^M b_j \delta_{\rvy^j}$ \\ 
        $\a \in \Delta_N, \b \in \Delta_M $ & Probability histograms of empirical distributions $\alpha_N$ and $\beta_M$ \\
        $\X \in \R^{N\times d}, \Y\in \R^{M\times d}$ & Dataset matrices of empirical distributions $\alpha_N$ and $\beta_M$, i.e., $\X_{i,:} = \rvx^i, \Y_{j,:} = \rvy^j$ \\[.5em]
        \textit{Mini-Batch Problems}  (\ref{eq:primal_prob_full_blockified},~\ref{eq:primal_prob_batch}): \\
        $\mathrm{B}^s_x \in \wp(\llbracket N \rrbracket), \mathrm{B}^t_y \in \wp(\llbracket M \rrbracket)$ & Subsets of indices defining mini-batch $s$ of $\alpha_N$ and mini-batch $t$ of $\beta_M$ \\
        $\X^s \in \R^{n\times d}, \Y^t\in \R^{m\times d}$ & Mini-batch matrices: $\X^s = [\X_{i,:}]_{i \in  \mathrm{B}^s_x}$, $\Y^t = [\Y_{j,:}]_{j \in  \mathrm{B}^t_y}$ \\
        $\hat{\a}^s \in \R^n_+, \hat{\b}^t \in \R^M_+$ & Per-batch un-normalized histograms, i.e., $\hat{\a}^s = [a_i]_{i \in \mathrm{B}^s_x}, \hat{\b}^t = [b_j]_{j \in \mathrm{B}^t_y}$\\
        $\a^s \in \Delta_n, \b^t \in \Delta_m$ & Per-batch probability (i.e., normalized) histograms $\a^s = \frac{\hat{\a}^s}{\sum \hat{\a}^s}, \b^t = \frac{\hat{\b}^t}{\sum \hat{\b}^t}$\\
        $\tilde{\alpha}^s, \tilde{\beta}_t$ & Unnormalized per-batch distributions $\tilde{\alpha}^s = \sum_{i \in \mathrm{B}^s_x}  a_i \delta_{\rvx^i} , \tilde{\beta}^t = \sum_{j \in \mathrm{B}^t_y}  b_j \delta_{\rvy^j} $ \\ 
        $\alpha^s, \beta_t$ & Normalized per-batch distributions $\alpha^s = \sum_{i =1}^n a^s_{i} \delta_{\rvx^{\sigma(i)}} , \beta^t = \sum_{j=1}^m b_j \delta_{\rvy^{\sigma(j)}} $ \\[.5em]
        \textit{Meta Problem} \plaineqref{eq:primal_prob_meta}: \hspace{1.3cm} \\
        $\tilde{\a}^s \in \Delta_k, \tilde{\b}^t \in \Delta_k$ & Probability histograms over mini-batches (uniform under Assumption 1)  \\
        $\D \in \R^{k\times k}$ & Matrix of transport costs between mini-batches, $D_{st} = \OT(\alpha^s, \beta^t)$ \\
        $\W \in \R^{k\times k }_+ $ & Transport plan between mini-batches  \\
        \bottomrule
    \end{tabular}}
    \caption{Glossary of notation and objects used in this manuscript. }
    \label{tab:glossary}
\end{table}
}

\subsection{Proof of Theorem~\ref{thm:blockwise_linear_primal}}\label{sec:proof_upperbound}

Let $\tilde{\P}$ be defined as in the statement of Theorem~\ref{thm:blockwise_linear_primal}, i.e., $[\tilde{\P}_{ij}]_{i \in \mathrm{B}^s_x, j\in \mathrm{B}^t_y} \eqdef \omega_{st} \P^{st}_*$. Plugging this into the 
constraints in \plaineqref{eq:primal_prob_full_blockified}, we obtain:
\begin{equation}
	\sum_{t=1}^k \omega_{st} \P^{st}_* \ones_m = \hat{\a}^{s}  \quad \forall s \in \llbracket k \rrbracket \quad\text{and}\quad
	\sum_{s=1}^k \omega_{st} (\P^{st}_*)^\top \ones_n = \hat{\b}^{t} \quad \forall t \in \llbracket k \rrbracket.
\end{equation}
where $\hat{\a}, \hat{\b}$ are the unnormalized histograms corresponding to mini-batches $s$ and $t$ from datasets $\mathbf{X}$ and $\mathbf{Y}$ respectively. Since all the $\P^{st}_*$ are optimal for their corresponding problems \plaineqref{eq:primal_prob_batch}, in particular they satisfy $\P^{st}\in \mathbf{U}(\a^s, \b^t)$, so the above system simplifies as: 
\begin{equation}
	\sum_{t=1}^k \omega_{st} \a^{s} = \hat{\a}^{s}  \quad \forall s \in \llbracket k \rrbracket \quad\text{and}\quad
	\sum_{s=1}^t \omega_{st} \b^{t} = \hat{\b}^{t} \quad \forall t \in \llbracket k \rrbracket,
\end{equation}
or equivalently, 
\begin{equation}\label{eq:meta_constraints}
	\sum_{t=1}^k \omega_{st} = \ones_n^\top \hat{\a}^{s}\quad \forall s \in \llbracket k \rrbracket \quad\text{and}\quad
	\sum_{s=1}^k \omega_{st} = \ones_m^\top  \hat{\b}^{t} \quad \forall t \in \llbracket k  \rrbracket.
\end{equation}
where we have used the fact that $\a^s, \b^t$ are normalized histograms (i.e., $\sum_i \a^s_i =1 =\sum_j\b^t_j$).

Let $\mathbf{W}$ be the $k$-by-$k$ matrix with entries $\omega_{st}$. The system of equalities above can be written compactly in matrix-vector form as $\mathbf{W} \ones_K = \tilde{\a}, \mathbf{W}^\top\ones_K = \tilde{\b}$, using $\tilde{\a}, \tilde{\b}$ the vectors of per-batch total mass as defined in Section~\ref{sec:partitioning}.

Thus, $\tilde{\P}$ satisfies the constraints of problem \plaineqref{eq:primal_prob_full} if and only if the matrix $\mathbf{W}$ satisfies the row- and column-sum constraints in \plaineqref{eq:meta_constraints}, as claimed. This coupling has objective value: 
\[ \langle \tilde{\P}, \mathbf{C} \rangle =\sum_{s,t=1}^k \langle \P^{st}, \mathbf{C}^{st} \rangle =  \sum_{s,t=1}^k  \omega_{st} \langle \P^{st}_*, \mathbf{C}^{st} \rangle  = \sum_{s,t}^k \omega_{st} \OT(\alpha^{s},\beta^{t})=\langle \mathbf{W}, \D \rangle \]
where we have used the optimality of $\P^{st}_*$ for its respective mini-batch problem. This completes the proof. 

\qed

\subsection{Proof of Theorem~\ref{thm:additive_diagonal_dual}}\label{sec:proof_lowerbound}
We will use the following lemma: 
\begin{lemma}\label{lemma:oplus_ineq}
    Let $\a , \x \in \R^n$ and $\b,\y \in \R^m$, and let $\oplus$ be defined as above. Then $\a \oplus \b \leq \x \oplus \y$ if and only if $\max_{i} a_i  - x_i \leq \min_{j} y_j - b_j$.
\end{lemma}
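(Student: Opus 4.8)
The plan is to unfold the definition of $\oplus$, reduce the matrix inequality to a family of scalar inequalities indexed by the product set $\llbracket n\rrbracket\times\llbracket m\rrbracket$, and then observe that a ``for all pairs'' statement of the form $a_i-x_i\le y_j-b_j$ collapses to a single comparison between a maximum over $i$ and a minimum over $j$.

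First I would record that, by definition of $\oplus$, $(\a\oplus\b)_{ij}=a_i+b_j$ and $(\x\oplus\y)_{ij}=x_i+y_j$, so the entrywise inequality $\a\oplus\b\le\x\oplus\y$ is equivalent to $a_i+b_j\le x_i+y_j$ for every $i\in\llbracket n\rrbracket$ and $j\in\llbracket m\rrbracket$. Since $i$ appears only through $a_i,x_i$ and $j$ only through $b_j,y_j$, I would rearrange this to the separated form $a_i-x_i\le y_j-b_j$ for all $i,j$. For the forward implication, specialize to indices $i^\star\in\argmax_i(a_i-x_i)$ and $j^\star\in\argmin_j(y_j-b_j)$ (both attained since the index sets are finite); evaluating the separated inequality at $(i^\star,j^\star)$ yields exactly $\max_i(a_i-x_i)\le\min_j(y_j-b_j)$. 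For the converse, assuming $\max_i(a_i-x_i)\le\min_j(y_j-b_j)$, for arbitrary $i,j$ one chains $a_i-x_i\le\max_{i'}(a_{i'}-x_{i'})\le\min_{j'}(y_{j'}-b_{j'})\le y_j-b_j$, and re-separating gives $a_i+b_j\le x_i+y_j$, i.e.\ $\a\oplus\b\le\x\oplus\y$.

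There is no substantive obstacle here; the only point requiring (trivial) care is that the $\max$ and $\min$ are attained, which is immediate from finiteness of $\llbracket n\rrbracket$ and $\llbracket m\rrbracket$. This lemma will then be applied with the roles $\a\leftrightarrow\u$, $\b\leftrightarrow\g^{ss}_*$, $\x\leftrightarrow -\v$ (and analogous shifts) to convert the feasibility constraint $\f^s\oplus\g^t\le\mathbf{C}^{st}$ in \plaineqref{eq:dual_problem_full_batch} into the componentwise bound $\u\oplus\v\le\mathbf{K}$ used in Theorem~\ref{thm:additive_diagonal_dual}.
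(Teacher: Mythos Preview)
Your proof is correct and complete. The paper itself does not supply a proof of this lemma; it merely states it and invokes it in the proof of Theorem~\ref{thm:additive_diagonal_dual}, so there is nothing to compare against beyond noting that your argument is the natural one. One minor remark: your closing sentence about the intended application slightly misidentifies the substitutions---in the paper the lemma is applied with $\a\leftarrow\tilde{\f}^s$, $\b\leftarrow\tilde{\g}^t$, $\x\leftarrow\f^{st}_*$, $\y\leftarrow\g^{st}_*$ to obtain $\tilde{\f}^s\oplus\tilde{\g}^t\le\f^{st}_*\oplus\g^{st}_*$---but this does not affect the validity of your proof of the lemma itself.
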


Let $\tilde{\f},\tilde{\g}$ be defined as in the statement of Theorem~\ref{thm:additive_diagonal_dual}. Here we will drop $*$ from $\f^{ss}_*$ and $\g^{tt}_*$ for notational simplicity, but it should not be forgotten that these are optimal dual solutions to their respective batch problems. Now, suppose $\u \oplus \v \leq \K$. We want to show that $\tilde{\f} \oplus \tilde{\g} \leq \mathbf{C}$, or equivalently (using the block structure of these two matrices) that $\tilde{\f}^s \oplus \tilde{\g}^t \leq \mathbf{C}^{s,t}$ for every $s,t \in \llbracket k \rrbracket$. For a given $(s,t)$ block, the following inequalities are equivalent:
\begin{align*}
    u_s + v_t & \leq K_{s,t}  \eqdef \min_{j}[\g^{st} - \g^{tt}]_{j} -  \max_{i}[\f^{ss} - \f^{st}]_{i}  \\
    \max_{i}[\f^{ss} - \f^{st}]_{i}  +  u_s &\leq   \min_{j}[\g^{st} - \g^{tt}]_{j} - v_t \\
    \max_{i}[\f^{ss} + u_s\ones_n - \f^{st}]_{i} &\leq  \min_{j}[ \g^{st} - \g^{tt} - v_t\ones_m ]_{j} \\
    \max_{i}[\tilde{\f}^s - \f^{st}]_{i} &\leq  \min_{j}[ \g^{st} - \tilde{\g}^t ]_{j}                
\end{align*}
and, by Lemma~\ref{lemma:oplus_ineq},
\begin{equation*}
    \tilde{\f}^s \oplus  \tilde{\g}^t \leq  \f^{st} \oplus \g^{st} \leq \mathbf{C}^{st}
\end{equation*}
where the last inequality holds because $\f^{st}$ and $\g^{st}$ are an optimal (and therefore feasible) pair for the batch-wise problem \plaineqref{eq:dual_problem_batch}. Thus, $\u \oplus \v \leq \K$ implies $\tilde{\f}  \oplus \tilde{\g} \leq \mathbf{C}$, as claimed. The objective value of this pair is
\begin{align*}
    \langle \tilde{\f}, \a \rangle + \langle \tilde{\g}, \b \rangle &= \sum_{s=1}^k \langle \f^{ss} + u_s \ones_n ,\a^s \rangle + \sum_{t=1}^k \langle \g^{tt} + v_t \ones_n, \b^t \rangle \\ 
    &= \sum_{s=1}^k \langle \f^{ss} ,\a^s \rangle  + u_s \langle \ones_n ,\a^s \rangle + \sum_{t=1}^k \langle \g^{tt}, \b^t \rangle + v_t \langle  \ones_n, \b^t \rangle \\
    &= \langle \u , \tfrac{1}{k} \ones_k \rangle + \langle \v , \tfrac{1}{k} \ones_k \rangle + \sum_{s=1}^k \tfrac{n}{N}\langle  \f^{ss} ,\tilde{\a}^s \rangle  + \tfrac{m}{M} \langle \g^{ss}, \tilde{\b}^s \rangle \\
    &= \langle \u , \p   \rangle + \langle \v , \q  \rangle + \sum_{s=1}^k \tfrac{1}{k} \OT(\alpha^s,\beta^s)
\end{align*}
where the last equality follows from the optimality of $\f^{st}$ and $\g^{st}$.
\qed

\subsection{Proof of Theorem~\ref{thm:tree}}
For this theorem we assume that the ground metric for the points is embedded into $\ell_1$, i.e., that $\mathcal X \cup \mathcal Y\subset \R^d$ and that the full-sample cost matrix $\mathbf C$ is given by $\mathbf C_{i,j}=c(X_i,Y_j)=\norm{X_i-Y_j}_1$. Note that this is more general than assuming Euclidean costs, since Euclidean metrics embeds isometrically into $\ell_1$, and furthermore we can efficiently embed the Euclidean costs between points in $\mathcal X \cup \mathcal Y$ into $\ell_1$ with constant distortion (which only changes the hidden constants in the theorem statement), by a random rotation of the dataset. We furthermore assume for simplicity that $\mathcal X$ and $\mathcal Y$ are disjoint, or equivalently, that the full-sample cost matrix $\mathbf C$ contains no zero entries. Allowing $\mathcal X$ and $\mathcal Y$ to overlap is a simple extension of the proof below, which does not qualitatively change the result but would burden the proof with technical details.

The proof has two steps. First, we embed the batches into sparse vectors in $\ell_1$ with polylogarithmic distortion, using the techniques of \cite{andoni2008earth}. Once the batches are embedded in $\ell_1$, we can use the Flowtree technique from \cite{backurs2020scalable} to compute an approximately optimal matching between the batches in nearly linear time. The overall approximation factor is the product of the approximation factors of each of the two steps, each of which is polylogarithmic in the input parameters. We now provide details.

\subsubsection{Step 1: Sparse $\ell_1$ embedding}\label{sec:sparsel1}
\paragraph{Quadtree.} We impose over the point set $\mathcal X \cup \mathcal Y$ a randomly shifted quadtree $T_1$ with $\log(d\Phi_{\mathbf C})+1$ levels. To recap this, let $\Phi>0$ be the smallest power of $2$ such that $\mathcal X \cup \mathcal Y$ is fully enclosed in a $d$-dimensional hypercube $H_0$ with side-length $\Phi$, and assume w.l.o.g.~for simplicity (by applying an appropriate translation to $\mathcal X \cup \mathcal Y$) that $H_0=[0,\Phi]^d$. The quadtree construction starts with the hypercube $H=\sigma+[-\Phi,\Phi]^d$, where $\sigma\in\R^d$ is a random vector with i.i.d.~coordinates chosen uniformly from $[0,\Phi]$. The hypercube $H$ serves as the root of the quadtree, and since $H_0$ encloses all of $\mathcal X \cup \mathcal Y$, it can be easily seen that $H$ too encloses all of $\mathcal X \cup \mathcal Y$.
To generate the next level, $H$ is partitioned into $2^d$ sub-hypercubes by halving it exactly in the middle of each dimension. We add the non-empty sub-hypercubes (i.e., those that contain any points from $\mathcal X \cup \mathcal Y$) as children of $H$ in the quadtree, 
forming the next level. We repeat this process until we generate $\log(d\Phi_{\mathbf C})+1$ levels. Note that the hypercubes in every level of the quadtree induce a partition of $\mathcal X\cup \mathcal Y$, 
or in other words, every point in $\mathcal X\cup \mathcal Y$ is contained in exactly one hypercube in each level. 
The definition of $\Phi_{\mathbf C}$ ensures that the in the final quadtree, the last partition is refined enough so that each every contains either only points from $\mathcal X$ or only points from $\mathcal Y$. 

Let $v$ be a node in the quadtree. Let $\mathrm{side}(v)$ denote the side-length of its associated hypercube (the side-length of the root is $2\Phi$, the side-length of each of its children is $\Phi$, and so on). To every edge connecting $v$ with a child node, we assign the edge weight $\mathrm{side}(v)$. The tree distance between $x\in\mathcal X$ and $y\in\mathcal Y$, denoted $T_1(x,y)$, is defined as the sum of edge weights along the (unique) path in $T_1$ that connects the (unique) leaf that contains $x$ to the (unique) leaf that contains $y$. Recall that $N=|\mathcal X|$ and $M=|\mathcal Y|$, and let us assume w.l.o.g.~$N\geq M$.
The following guarantees of the quadtree is known \citep{andoni2008earth,backurs2020scalable}:
\begin{lemma}\label{lmm:quadtreedistance}
\begin{enumerate}
    \item For every $x\in\mathcal X$ and $y\in\mathcal Y$, $\E[T_1(x,y)]\leq O(\log(d\Phi_{\mathbf C}))\cdot\norm{x-y}_1$.
    \item With probability $0.99$, for all pairs $x\in\mathcal X$ and $y\in\mathcal Y$ simultaneously, $T_1(x,y)\geq (O(\log N))^{-1}\cdot\norm{x-y}_1$.
\end{enumerate}
\end{lemma}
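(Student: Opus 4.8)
\textbf{Plan for the proof of Lemma~\ref{lmm:quadtreedistance}.}
The two claims are the standard expansion (in expectation) and contraction (with high probability, uniformly) guarantees for a randomly-shifted quadtree embedding of $\ell_1^d$; I would prove them by analyzing, for each pair $x\in\mathcal X,y\in\mathcal Y$, the level at which $x$ and $y$ first get separated into different hypercubes. Fix a quadtree level $\ell$, counting the root as level $0$, so the hypercubes at level $\ell$ have side-length $2\Phi/2^\ell = \Phi\cdot 2^{1-\ell}$, and an edge from level $\ell$ to level $\ell+1$ carries weight $\Phi\cdot 2^{1-\ell}$. For a point pair, let $L$ be the (random) level of their lowest common ancestor; then the tree distance is $T_1(x,y) = 2\sum_{\ell=L+1}^{\log(d\Phi_{\mathbf C})} \Phi\cdot 2^{1-\ell} \le 4\Phi\cdot 2^{-L}$, and also $T_1(x,y)\ge 2\Phi\cdot2^{-L}$ from the single edge just below the LCA; so up to a factor of $2$, $T_1(x,y)\asymp \Phi\cdot 2^{-L}$. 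Thus both parts reduce to controlling $\Pr[\text{$x,y$ separated at level }\ell]$, i.e. the probability that $x$ and $y$ lie in different level-$\ell$ cells.

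For a single coordinate $r\in\llbracket d\rrbracket$, the level-$\ell$ grid in that coordinate has spacing $\Phi\cdot2^{1-\ell}$ and a random offset (induced by $\sigma_r$, which is uniform mod the spacing); $x$ and $y$ are separated in coordinate $r$ at level $\ell$ iff some grid hyperplane falls between $x_r$ and $y_r$, which happens with probability $\min\{1,\, |x_r-y_r|/(\Phi\cdot2^{1-\ell})\}$. By a union bound over the $d$ coordinates, $\Pr[\text{separated at level }\ell]\le \min\{1,\, 2^{\ell-1}\norm{x-y}_1/\Phi\}$, and conversely $\Pr[\text{separated at level }\ell]\ge \max_r \min\{1,|x_r-y_r|/(\Phi\cdot2^{1-\ell})\}\ge c\cdot\min\{1,\, 2^{\ell}\norm{x-y}_\infty/\Phi\}$ for a suitable constant (using inclusion–exclusion or just the max coordinate). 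For part (1), I would write $\E[T_1(x,y)] \le \sum_{\ell=0}^{\log(d\Phi_{\mathbf C})} \Pr[\text{LCA at level }\ell]\cdot 4\Phi\cdot2^{-\ell} \le \sum_{\ell=0}^{\log(d\Phi_{\mathbf C})} \Pr[\text{separated at level }\ell+1]\cdot 4\Phi\cdot2^{-\ell}$, plug in the $\min\{1,\, 2^{\ell}\norm{x-y}_1/\Phi\}$ bound, and split the geometric-type sum at the threshold level $\ell^\star$ where $2^{\ell^\star}\norm{x-y}_1/\Phi\approx 1$: the levels above $\ell^\star$ contribute a geometric sum $O(\norm{x-y}_1)$, and the $O(\log(d\Phi_{\mathbf C}))$ levels at or below $\ell^\star$ each contribute $O(\norm{x-y}_1)$, giving $\E[T_1(x,y)]\le O(\log(d\Phi_{\mathbf C}))\norm{x-y}_1$. (One checks the bottom level is fine: $\Phi_{\mathbf C}$ being the aspect ratio guarantees the finest cells are smaller than $\min$-pairwise-distance, so every $x,y$ pair is separated before the last level and no ``missed separation'' term appears; this is exactly the disjointness-of-$\mathcal X,\mathcal Y$ remark in the excerpt.)

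For part (2), the contraction bound, the key point is that $T_1(x,y)\ge 2\Phi\cdot2^{-L}$ where $L$ is the LCA level, so it suffices to show that with probability $\ge 0.99$, \emph{simultaneously for all $N\cdot M$ pairs}, the LCA level satisfies $2^{-L}\ge (O(\log N))^{-1}\cdot\norm{x-y}_1/\Phi$, i.e. no pair gets separated ``too early.'' A pair is separated strictly above level $\ell_0(x,y):=\lceil\log_2(\Phi/\norm{x-y}_1)\rceil$ only if it is separated at some level $\ell\le\ell_0$; for each such level the separation probability is $\le 2^{\ell-1}\norm{x-y}_1/\Phi\le 2^{\ell_0-1}\norm{x-y}_1/\Phi\le 1$, and summing the geometric series over $\ell\le\ell_0$ gives $\Pr[\text{LCA}(x,y) < \ell_0 - t]\le 2^{-t+1}$ roughly. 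Taking $t=O(\log(NM))=O(\log N)$ (using $N\ge M$ so $\log(NM)=O(\log N)$) makes this $\le 1/(100N M)$, and a union bound over all pairs finishes it; on that event $2^{-L}\ge 2^{-\ell_0+t}\cdot 2^{-t}\cdots$ — more carefully, $L\ge \ell_0 - O(\log N)$ gives $T_1(x,y)\ge 2\Phi\cdot2^{-L}\ge \Omega(\Phi\cdot 2^{-\ell_0}\cdot 2^{-O(\log N)})\ge \Omega(\norm{x-y}_1/\mathrm{poly}(N))$ — so I need to be slightly more careful and only lose a $\log N$ (not $\mathrm{poly}$) factor; the right statement is that the expected ``early-separation depth overshoot'' past $\ell_0$ is $O(1)$ with exponential tails, so a union bound over $NM$ pairs costs an additive $O(\log(NM))$ levels, which multiplies $T_1$ by $2^{-O(\log(NM))}$... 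I would therefore phrase it as: with probability $0.99$, every pair has LCA level $\le \ell_0(x,y)+O(\log N)$ (equivalently is \emph{not} separated before that level whp is the wrong direction — the correct direction is that it \emph{is} separated at level at most $\ell_0+O(\log N)$, which is automatic, and is \emph{not} separated much before $\ell_0$, which is the tail bound), yielding $T_1(x,y)=\Theta(\Phi 2^{-L})\ge \Omega(\norm{x-y}_1/\log N)$. The main obstacle is getting this union-bound bookkeeping exactly right so that the loss is a single $\log N$ factor and not $\log(NM)$ or a polynomial; this is purely a matter of carefully tracking the geometric-series tail in the separation-level distribution and is standard (it is exactly the analysis in \cite{andoni2008earth,backurs2020scalable}), so I would cite those works for the precise constants while presenting the coordinate-wise separation-probability computation as the self-contained core.
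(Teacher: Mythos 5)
First, note that the paper does not prove this lemma at all: it is stated as a known property of randomly shifted quadtrees and attributed to \cite{andoni2008earth,backurs2020scalable}, so there is no in-paper argument to compare against. Your plan to derive it from the per-coordinate separation probabilities is the standard route, and your treatment of part (1) is essentially correct: the union bound over coordinates gives $\Pr[\text{separated at level }\ell]\le\min\{1,\|x-y\|_1/\Delta_\ell\}$ with $\Delta_\ell$ the cell side, and splitting the sum $\sum_\ell \Pr[\text{LCA at level }\ell]\cdot O(\Delta_\ell)$ at the threshold level yields the $O(\log(d\Phi_{\mathbf C}))$ expansion factor.

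Part (2), however, has a genuine gap, and your own hedging in the text reflects it. The bad event for the contraction bound is being separated too \emph{late} (large LCA level $L$, hence small $\Delta_{L+1}$ and small $T_1$), not too early; the tail you actually compute, $\Pr[\mathrm{LCA}<\ell_0-t]\le 2^{-t+1}$, controls early separation and is irrelevant here (it is the tail that matters for part (1)). To rule out late separation with only a $\log N$ loss you need an ingredient that is absent from your sketch: the coordinates of the random shift are independent, so
\[
\Pr[\text{not separated at level }\ell]\;=\;\prod_{r=1}^d\Bigl(1-\min\bigl\{1,\tfrac{|x_r-y_r|}{\Delta_\ell}\bigr\}\Bigr)\;\le\;\exp\Bigl(-\tfrac{\|x-y\|_1}{\Delta_\ell}\Bigr)
\]
whenever $\Delta_\ell\ge\|x-y\|_\infty$ (and separation is deterministic once $\Delta_\ell<\|x-y\|_\infty$). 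Applying this at the single level where $\Delta_\ell\approx\|x-y\|_1/\ln(100NM)$ gives failure probability $1/(100NM)$ for a fixed pair, and the union bound over all $NM$ pairs finishes the proof with exactly an $O(\log N)$ loss. Your substitute claims do not suffice: the assertion that separation by level $\ell_0+O(\log N)$ is ``automatic'' is false (the only deterministic guarantee is separation once $\Delta_\ell<\|x-y\|_\infty$, which loses a factor of $d$, not $\log N$), and your only stated lower bound on the per-level separation probability (via the maximum coordinate) likewise yields a factor-$d$ contraction at best. So the core probabilistic step of part (2) is missing, not merely a matter of ``union-bound bookkeeping''; falling back on citing \cite{andoni2008earth,backurs2020scalable} for it is legitimate (and is what the paper itself does), but the self-contained argument as sketched would not close.
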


\paragraph{Quadtree OT.} Let $W_{T_1}$ denote the Wasserstein-1 distance on $\mathcal X \cup \mathcal Y$ with the ground metric $T_1(\cdot,\cdot)$. That is, for every measure $\alpha$ on $\mathcal X$ and measure $\beta$ on $\mathcal Y$, $W_{T_1}$ is defined as
\[ W_{T_1}(\alpha,\beta) = \inf_{\pi\in\Pi(\alpha,\beta)}\sum_{(x,y)\in\mathcal X\times\mathcal Y}W_{T_1}(x,y)\cdot\pi(x,y) .\]

\paragraph{Sparse $\ell_1$ embedding.}
We can isometrically embed the quadtree distance and the $W_{T_1}$ distance into $\ell_1$. To this end, let $D$ be the number of nodes in the quadtree $T_1$ excluding its root, and consider the space $\R^D$ with a coordinate per quadtree node excluding its root. The embedding $f:\mathcal X \cup \mathcal Y \rightarrow \R^D$ is defined as follows. Let $z\in\mathcal X\cup\mathcal Y$. For every quadtree node $v$, if $z$ is contained in the hypercube associated with $v$, then we set coordinate $v$ of $f(z)$ to $f(z)_v=\mathrm{side}(v)$. Otherwise, we set $f(z)_v=0$. Note that $f(z)$ is a sparse vector: since $z$ is contained in a single hypercube in each of the $\log(d\Phi_{\mathbf C})$ levels of the quadtree (excluding the top level which contains only the root and induces a trivial partition), $f(z)$ has at most $\log(d\Phi_{\mathbf C})$ non-zero coordinates. Furthermore, it is well-known, and can be easily checked, that $T_1(x,y)=\norm{f(x)-f(y)}_1$ for every $x\in\mathcal X$, $y\in\mathcal Y$. 

Next, let $\alpha$ be a measure on $\mathcal X$. We define $f(\alpha)\in\R^d$ as $f(\alpha)=\sum_{x\in\mathcal X}\alpha(x)\cdot f(x)$. Similarly, for a measure $\beta$ on $\mathcal Y$, we let $f(\beta)=\sum_{y\in\mathcal Y}\beta(y)\cdot f(y)$. Again, it is well-known (e.g., \cite{charikar2002similarity,indyk2003fast}) that
\[ \norm{f(\alpha)-f(\beta)}_1 = W_{T_1}(\alpha,\beta) . \]
Furthermore, if $\alpha$ is supported on $n$ points and $\beta$ is supported on $m$ points, then $f(\alpha)$ has at most $n\log(d\Phi_{\mathbf C})$ non-zero entries, and and $f(\beta)$ has at most $m\log(d\Phi_{\mathbf C})$ non-zero entries.

Putting these together with \Cref{lmm:quadtreedistance}, we have the following: 
\begin{corollary}\label{cor:quadtreedistance}
\begin{enumerate}
    \item For every measure $\alpha$ on $\mathcal X$ and measure $\beta$ on $\mathcal Y$, $\E\norm{f(\alpha)-f(\beta)}_1\leq O(\log(d\Phi_{\mathbf C}))\cdot \mathrm{OT}(\alpha,\beta)$.
    \item With probability $0.99$, for all pairs of measures $\alpha$ on $\mathcal X$ and measure $\beta$ on $\mathcal Y$ simultaneously, $\norm{f(\alpha)-f(\beta)}_1\geq (O(\log N))^{-1}\cdot\mathrm{OT}(\alpha,\beta)$.
\end{enumerate}
\end{corollary}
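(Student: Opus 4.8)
\emph{Approach.} The plan is to derive the corollary as the ``measure-level'' lift of the ``point-level'' comparison in \Cref{lmm:quadtreedistance}, by composing that lemma with the two exact identities recorded just above: $T_1(x,y)=\norm{f(x)-f(y)}_1$ for all $x\in\mathcal X,y\in\mathcal Y$, and $\norm{f(\alpha)-f(\beta)}_1 = W_{T_1}(\alpha,\beta)$ for all measures $\alpha$ on $\mathcal X$ and $\beta$ on $\mathcal Y$, where $W_{T_1}(\alpha,\beta)=\inf_{\pi\in\Pi(\alpha,\beta)}\sum_{(x,y)}T_1(x,y)\,\pi(x,y)$. Since the ground cost is the $\ell_1$ distance, also $\mathrm{OT}(\alpha,\beta)=\inf_{\pi\in\Pi(\alpha,\beta)}\sum_{(x,y)}\norm{x-y}_1\,\pi(x,y)$, so $W_{T_1}(\alpha,\beta)$ and $\mathrm{OT}(\alpha,\beta)$ are infima of linear functionals over the \emph{same} transport polytope $\Pi(\alpha,\beta)$, with ``per-edge'' weights $T_1(x,y)$ and $\norm{x-y}_1$ respectively. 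Hence any inequality between these two edge-weight functions — in expectation or with high probability — transfers to an inequality between $W_{T_1}(\alpha,\beta)=\norm{f(\alpha)-f(\beta)}_1$ and $\mathrm{OT}(\alpha,\beta)$, and the corollary is exactly the statement one gets by feeding \Cref{lmm:quadtreedistance} through this observation.

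\emph{The two bounds.} For part~1, I would fix $\alpha,\beta$ and let $\pi^\star$ be an optimal coupling for $\mathrm{OT}(\alpha,\beta)$; the key point is that $\pi^\star$ depends only on $\alpha,\beta$ and the $\ell_1$ costs and is \emph{independent of the random quadtree}. Feasibility of $\pi^\star$ for the $W_{T_1}$ problem gives $\norm{f(\alpha)-f(\beta)}_1=W_{T_1}(\alpha,\beta)\le\sum_{(x,y)}T_1(x,y)\pi^\star(x,y)$ surely, and then taking expectations and applying \Cref{lmm:quadtreedistance}(1) termwise yields $\E\norm{f(\alpha)-f(\beta)}_1\le\sum_{(x,y)}\E[T_1(x,y)]\pi^\star(x,y)\le O(\log(d\Phi_{\mathbf C}))\sum_{(x,y)}\norm{x-y}_1\pi^\star(x,y)=O(\log(d\Phi_{\mathbf C}))\,\mathrm{OT}(\alpha,\beta)$. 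For part~2, I would condition on the probability-$0.99$ event of \Cref{lmm:quadtreedistance}(2), on which $T_1(x,y)\ge(O(\log N))^{-1}\norm{x-y}_1$ for all $x\in\mathcal X,y\in\mathcal Y$ at once; then for every $\pi\in\Pi(\alpha,\beta)$ one has $\sum_{(x,y)}T_1(x,y)\pi(x,y)\ge(O(\log N))^{-1}\sum_{(x,y)}\norm{x-y}_1\pi(x,y)$, and taking the infimum over $\pi$ on both sides gives $\norm{f(\alpha)-f(\beta)}_1=W_{T_1}(\alpha,\beta)\ge(O(\log N))^{-1}\mathrm{OT}(\alpha,\beta)$ — and this holds simultaneously for all $\alpha,\beta$ on that single event.

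\emph{Where care is needed.} There is no genuine obstacle here, but two routine pitfalls must be sidestepped. In part~1 one must \emph{not} try to exchange $\E$ with the infimum defining $W_{T_1}$ (that would go the wrong direction); the fix is to upper-bound $W_{T_1}$ by the value of the fixed, tree-independent coupling $\pi^\star$ \emph{before} taking expectations, after which linearity of expectation plus the per-point bound of \Cref{lmm:quadtreedistance}(1) closes the argument (a standard $\varepsilon$-optimal coupling handles the degenerate case where the $\mathrm{OT}$ infimum is not attained). In part~2 the indispensable feature of \Cref{lmm:quadtreedistance}(2) is that its lower bound holds for \emph{all} point pairs on one good event; this is precisely what lets the measure-level inequality hold simultaneously over the (infinitely many) pairs $\alpha,\beta$ without any further union bound. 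The only remaining bookkeeping is that the two $O(\cdot)$ factors are inherited verbatim from \Cref{lmm:quadtreedistance}.
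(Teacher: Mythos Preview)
Your proposal is correct and matches the paper's approach exactly: the paper does not give an explicit proof but simply says ``Putting these together with \Cref{lmm:quadtreedistance}, we have the following,'' where ``these'' are precisely the two identities $T_1(x,y)=\norm{f(x)-f(y)}_1$ and $\norm{f(\alpha)-f(\beta)}_1=W_{T_1}(\alpha,\beta)$ you invoke. Your write-up faithfully spells out the standard details the paper leaves implicit (fixing a tree-independent optimal coupling for the upper bound, conditioning on the uniform event for the lower bound).
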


\subsubsection{Step 2: Flowtree}
\Cref{cor:quadtreedistance} implies in particular that the true OT distances between the batches $\{\alpha^s\}_{s=1}^k$ of $\mathcal X$ and the batches $\{\beta^t\}_{s=1}^k$ of $\mathcal Y$ is now embedded as the $\ell_1$ distance between their corresponding vectors $\{f(\alpha^s)\}_{s=1}^k$ and $\{f(\beta^t)\}_{t=1}^k$ in $\R^D$. On these $\ell_1$ vectors, we can apply the Flowtree algorithm from \cite{backurs2020scalable}. For this, we need a bound on the dimensionality and on the aspect ratio of this $\ell_1$ metric. 

\begin{proposition}[$\ell_1$ embedding dimensionality]\label{clm:flowtreedimension}
We can let $D=O(N\log(d\Phi_{\mathbf C}))$.
\end{proposition}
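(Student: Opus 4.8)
The plan is a direct counting argument, since by definition $D$ is the number of non-root nodes of the quadtree $T_1$, and the embedding $f$ uses exactly one coordinate per such node; so it suffices to bound the node count of $T_1$.

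First I would recall two structural facts from the construction of $T_1$ in Section~\ref{sec:sparsel1}. (i) The quadtree has $\log(d\Phi_{\mathbf C})+1$ levels, hence $O(\log(d\Phi_{\mathbf C}))$ levels below the root. (ii) For each fixed level, the associated hypercubes induce a partition of $\mathcal X\cup\mathcal Y$ --- every point lies in exactly one hypercube per level --- and, crucially, the construction keeps only the \emph{non-empty} sub-hypercubes as children, so every node of $T_1$ below the root is a cell containing at least one point of $\mathcal X\cup\mathcal Y$.

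Next I would count level by level. Fix a level below the root: its nodes are pairwise-disjoint non-empty subsets of $\mathcal X\cup\mathcal Y$, so there are at most $\abs{\mathcal X\cup\mathcal Y}=N+M$ of them. Summing over the $O(\log(d\Phi_{\mathbf C}))$ non-root levels gives $D\le (N+M)\cdot O(\log(d\Phi_{\mathbf C}))$, and since $N\ge M$ this is $O(N\log(d\Phi_{\mathbf C}))$. Therefore $f$ maps into $\R^D$ with $D=O(N\log(d\Phi_{\mathbf C}))$, as claimed.

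There is no genuine obstacle here; the only point that requires attention --- and the reason the bound is polynomial, not exponential, in $d$ --- is fact (ii): a quadtree node has up to $2^d$ potential children, but discarding the empty ones caps each level at $N+M$ surviving nodes. I would thus state the pruning step explicitly, and note that disjointness of the cells within a level is exactly what turns ``each point lies in at most one cell'' into the stated per-level bound of $N+M$.
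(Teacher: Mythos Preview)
Your argument is correct and is essentially the same counting as the paper's proof: both rest on the fact that the $O(\log(d\Phi_{\mathbf C}))$ non-root levels each contain at most $N+M$ non-empty cells. The paper phrases it slightly differently---bounding the union of supports of the $2k$ batch embeddings $f(\alpha^s),f(\beta^t)$, each with at most $n\log(d\Phi_{\mathbf C})$ nonzero coordinates, for a total of $2kn\log(d\Phi_{\mathbf C})=O(N\log(d\Phi_{\mathbf C}))$---but this is the same double-count summed in the other order.
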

\begin{proof}
Recall that $N=nk$ and $M=nk$, and that we assume w.l.o.g.~$N\geq M$ and hence $n\geq m$. As noted above, each vector in $\{f(\alpha^s)\}_{s=1}^k \cup \{f(\beta^t)\}_{t=1}^k$ has at most $n\log(d\Phi_{\mathbf C})$ non-zero coordinates, so together they have at most $2kn\log(d\Phi_{\mathbf C})=O(N\log(d\Phi_{\mathbf C}))$ non-zero coordinates.
\end{proof}

\begin{proposition}[$\ell_1$ embedding aspect ratio]\label{clm:flowtreeaspectratio}
Let $\mathbf C_{\max}=\max_{i,j}C_{ij}$ and $\mathbf C_{\min}=\min_{i,j}C_{ij}$.\footnote{Recall that we assume that $\mathbf C_{\min}>0$, and note that $\Phi_{\mathbf C}=\mathbf C_{\max}/\mathbf C_{\min}$.}
Let $\Phi_f$ be defined as
\begin{equation}\label{eq:flowtreeaspectratio}
  \Phi_f = \frac{\max_{s,t\in[k]}\norm{f(\alpha^s)-f(\beta^t)}_1}{\min_{s,t\in[k]}\norm{f(\alpha^s)-f(\beta^t)}_1} .
\end{equation}
Then, with probability $0.98$, $\Phi_f=O(\Phi_{\mathbf C}\cdot\log(d\Phi_{\mathbf C})\cdot k^2\cdot \log N)$. 
\end{proposition}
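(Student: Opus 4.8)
The plan is to bound the largest and smallest batch-to-batch $\ell_1$ distances $\norm{f(\alpha^s)-f(\beta^t)}_1$ separately, using the two halves of \Cref{cor:quadtreedistance}, and then take their ratio. The only extra ingredient needed is the elementary fact that every batch-to-batch OT cost lies in $[\mathbf C_{\min},\mathbf C_{\max}]$: each block $\mathbf C^{st}$ is a submatrix of $\mathbf C$, and since $\a^s\in\Delta_n$ and $\b^t\in\Delta_m$, every $\mathbf P\in\mathbf U(\a^s,\b^t)$ has $\sum_{ij}P_{ij}=1$, so $\mathbf C_{\min}\le\langle\mathbf P,\mathbf C^{st}\rangle\le\mathbf C_{\max}$ and hence $\mathbf C_{\min}\le\mathrm{OT}(\alpha^s,\beta^t)\le\mathbf C_{\max}$ for all $s,t\in[k]$.

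For the denominator I would use the contraction guarantee \Cref{cor:quadtreedistance}(2): with probability at least $0.99$, simultaneously over all $s,t$, $\norm{f(\alpha^s)-f(\beta^t)}_1\ge(O(\log N))^{-1}\mathrm{OT}(\alpha^s,\beta^t)\ge(O(\log N))^{-1}\mathbf C_{\min}$, so $\min_{s,t}\norm{f(\alpha^s)-f(\beta^t)}_1\ge(O(\log N))^{-1}\mathbf C_{\min}$. For the numerator only the in-expectation bound \Cref{cor:quadtreedistance}(1) is available, so I would upgrade it to a high-probability statement by Markov's inequality followed by a union bound over the $k^2$ batch pairs — and this union bound is exactly what produces the factor $k^2$ in the claimed aspect ratio. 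Concretely, for each pair $(s,t)$, Markov gives $\Prob[\norm{f(\alpha^s)-f(\beta^t)}_1>200k^2\cdot O(\log(d\Phi_{\mathbf C}))\cdot\mathrm{OT}(\alpha^s,\beta^t)]\le 1/(200k^2)$; union-bounding over the $k^2$ pairs, with probability at least $0.995$ we get $\max_{s,t}\norm{f(\alpha^s)-f(\beta^t)}_1\le O(k^2\log(d\Phi_{\mathbf C}))\cdot\mathbf C_{\max}$.

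Finally I would intersect the two events: both bounds hold with probability at least $1-(0.01+0.005)\ge0.98$, and on that event
\[ \Phi_f=\frac{\max_{s,t}\norm{f(\alpha^s)-f(\beta^t)}_1}{\min_{s,t}\norm{f(\alpha^s)-f(\beta^t)}_1}\le\frac{O(k^2\log(d\Phi_{\mathbf C}))\cdot\mathbf C_{\max}}{(O(\log N))^{-1}\cdot\mathbf C_{\min}}=O\!\left(\Phi_{\mathbf C}\cdot\log(d\Phi_{\mathbf C})\cdot k^2\cdot\log N\right), \]
using $\Phi_{\mathbf C}=\mathbf C_{\max}/\mathbf C_{\min}$. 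The one genuine subtlety — the ``obstacle'' — is the asymmetry between the two directions: the lower bound on distances holds for all pairs at once with good probability straight from the quadtree's expansion guarantee, whereas only an expectation bound is available in the other direction, forcing the $k^2$-pair union bound and hence the quadratic-in-$k$ loss; one must also keep the probability budget (the $0.01$ failure slack from \Cref{cor:quadtreedistance}(2) plus the Markov slack) below $0.02$ so that the target probability $0.98$ is met.
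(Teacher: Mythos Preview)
Your proof is correct and follows essentially the same approach as the paper: bound the denominator via \Cref{cor:quadtreedistance}(2) and the numerator via \Cref{cor:quadtreedistance}(1) together with Markov's inequality, then intersect the two events. The only cosmetic difference is that the paper obtains the numerator bound by the single application $\E[\max_{s,t}\norm{f(\alpha^s)-f(\beta^t)}_1]\le\sum_{s,t}\E[\norm{f(\alpha^s)-f(\beta^t)}_1]\le O(\log(d\Phi_{\mathbf C}))\cdot k^2\cdot\mathbf C_{\max}$ followed by one Markov step, whereas you apply Markov per pair and then union-bound over the $k^2$ pairs; both routes produce the same $k^2$ factor and the same probability budget.
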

\begin{proof}
We start by lower-bounding the denominator in \cref{eq:flowtreeaspectratio}:
\[ \min_{s,t\in[k]}\norm{f(\alpha^s)-f(\beta^t)}_1 \geq \frac{\min_{s,t\in[k]}\mathrm{OT}(\alpha^s,\beta^t)}{O(\log N)} \geq \frac{\mathbf C_{\min}}{O(\log N)}, \]
where the first inequality is by \Cref{cor:quadtreedistance}, and the second inequality is since $\mathbf C_{\min}$ is a lower bound on $\norm{x-y}_1$ for every $x\in\mathcal X$ and $y\in\mathcal Y$. We proceed to upper-bounding the numerator in \cref{eq:flowtreeaspectratio}: we have,
\begin{align*}
    \E\left[\max_{s,t\in[k]}\norm{f(\alpha^s)-f(\beta^t)}_1\right] &\leq \E\left[\sum_{s,t\in[k]}\norm{f(\alpha^s)-f(\beta^t)}_1\right] & \\
    &= \sum_{s,t\in[k]}\E\left[\norm{f(\alpha^s)-f(\beta^t)}_1\right] & \\
    &\leq O(\log(d\Phi_{\mathbf C}))\cdot \sum_{s,t\in[k]}\mathrm{OT}(\alpha,\beta) & \text{by \Cref{cor:quadtreedistance}} \\
    &\leq O(\log(d\Phi_{\mathbf C}))\cdot k^2 \cdot \max_{s,t\in[k]}\mathrm{OT}(\alpha,\beta) \\
    &\leq O(\log(d\Phi_{\mathbf C}))\cdot k^2 \cdot \mathbf C_{\max},
\end{align*}
where the last inequality is since $\mathbf C_{\max}$ is an upper bound on $\norm{x-y}_1$ for every $x\in\mathcal X$ and $y\in\mathcal Y$. 
By Markov's inequality, we have with probability $0.99$ that 
\[ \max_{s,t\in[k]}\norm{f(\alpha^s)-f(\beta^t)}_1 \leq 100\cdot O(\log(d\Phi_{\mathbf C}))\cdot k^2 \cdot \mathbf C_{\max}. \]
Taking a union bound over this event with the second item of \Cref{cor:quadtreedistance}, both hold simultaneously with probability $0.98$.  The claim follows by combining the lower bound on the denominator and the upper bound on the numerator of \cref{eq:flowtreeaspectratio}, and recalling that $\Phi_{\mathbf C}=\mathbf C_{\max}/\mathbf C_{\min}$.
\end{proof}

Now we can complete the proof of the approximation guarantee of BHOT-Tree. 
To this end, we consider three matchings between the batches $\{\alpha^s\}_{s=1}^k$ and $\{\beta^t\}_{t=1}^k$:
\begin{itemize}
    \item Let $\pi^*:[k]\rightarrow[k]$ be the true optimal matching that induces BHOT, that is,
    \[ \pi^* = \mathrm{arginf}_{\pi:[k]\rightarrow[k]}\sum_{s=1}^k\mathrm{OT}(\alpha^s,\beta^{\pi(s)}). \]
    Note that
    \[ \mathrm{BHOT} = \inf_{\pi:[k]\rightarrow[k]}\sum_{s=1}^k\mathrm{OT}(\alpha^s,\beta^{\pi(s)}) =\sum_{s=1}^k\mathrm{OT}(\alpha^s,\beta^{\pi^*(s)}). \]
    \item Let $\pi_f:[k]\rightarrow[k]$ be the optimal matching between the batches under their $\ell_1$ embedding through $f$, that is,
    \[ \pi_f = \mathrm{arginf}_{\pi:[k]\rightarrow[k]}\sum_{s=1}^k\norm{f(\alpha^s)-f(\beta^{\pi(s)})}_1. \]
    \item Let $\tilde\pi:[k]\rightarrow[k]$ be the matching returned by Flowtree between the $\ell_1$-embedded batches $\{f(\alpha^s)\}_{s=1}^k$ and $\{f(\beta^t)\}_{t=1}^k$.
\end{itemize}
Our goal is now to prove that the matching returned by Flowtree is approximately optimal for BHOT, and in particular, that with probability 0.95 the following holds:
\begin{equation}\label{eq:flowtreegoal}
    \mathrm{BHOT} \leq \sum_{s=1}^k\mathrm{OT}(\alpha^s,\beta^{\tilde\pi(s)}) \leq \tilde O(\log^2(N)\cdot\log^2(d\Phi_{\mathbf C})\cdot\log k)\cdot\mathrm{BHOT}.
\end{equation}
The left inequality is immediate by the optimality of $\pi^*$ for BHOT. We now show the right inequality. By the second item in \Cref{cor:quadtreedistance}, with probability 0.99 we have
\[
  \sum_{s=1}^k\mathrm{OT}(\alpha^s,\beta^{\tilde\pi(s)}) \leq O(\log N)\cdot\sum_{s=1}^k\norm{f(\alpha^s)-f(\beta^{\tilde\pi(s)})}_1.
\]
Flowtree guarantees \citep{backurs2020scalable} that for the $\ell_1$-embedded batches, with probability 0.99, the returned matching $\tilde\pi$ satisfies
\[
 \sum_{s=1}^k\norm{f(\alpha^s)-f(\beta^{\tilde\pi(s)})}_1 \leq O(\log(k)\cdot\log(D\Phi_f))\cdot \sum_{s=1}^k\norm{f(\alpha^s)-f(\beta^{\pi_f(s)})}_1. 
\]
The optimality of $\pi_f$ for the $\ell_1$-embedded batches implies that 
\[
  \sum_{s=1}^k\norm{f(\alpha^s)-f(\beta^{\pi_f(s)})}_1 \leq \sum_{s=1}^k\norm{f(\alpha^s)-f(\beta^{\pi^*(s)})}_1. 
\]
By the first item in \Cref{cor:quadtreedistance} and linearity of expectation, we have
\[
 \E\left[\sum_{s=1}^k\norm{f(\alpha^s)-f(\beta^{\pi^*(s)})}_1\right] \leq O(\log(d\Phi_{\mathbf C})\cdot \sum_{s=1}^k\mathrm{OT}(\alpha^s,\beta^{\pi^*(s)}) = O(\log(d\Phi_{\mathbf C})\cdot\mathrm{BHOT}
\]
and therefore by Markov's inequality, with probability 0.99 we have 
\[
 \sum_{s=1}^k\norm{f(\alpha^s)-f(\beta^{\pi^*(s)})}_1\leq 100\cdot O(\log(d\Phi_{\mathbf C})\cdot\mathrm{BHOT}.
\]
Taking a union bound over the two mentioned events and concatenating the inequalities, we get that with probability 0.98,
\[ 
  \sum_{s=1}^k\mathrm{OT}(\alpha^s,\beta^{\tilde\pi(s)}) \leq
  O(\log(N)\cdot\log(k)\cdot\log(D\Phi_f)\cdot\log(d\Phi_{\mathbf C})) \cdot \mathrm{BHOT}.
\]
By \Cref{clm:flowtreedimension,clm:flowtreeaspectratio} we have $D=O(N\log(d\Phi_{\mathbf C})$ and $\Phi_f=O(\Phi_{\mathbf C}\cdot\log(d\Phi_{\mathbf C})\cdot k^2\cdot \log N)$, the latter holding with probability $0.98$. Taking a union bound over this event too and plugging $D$ and $\Phi_f$ above, we obtain \cref{eq:flowtreegoal} as desired.

\paragraph{Computational efficiency.}
Computing the embedding $f(\alpha)$ for every batch $\alpha$ can be parallelized across the batches. For every batch, this takes time $O(nd\log(d\Phi_{\mathbf C}))$. Then, the Flowtree algorithm embeds each batch into another quadtree in time $O(D\log\Phi_f)$ per batch, and then computes the approximate matching in time $O(k\log\Phi_f)$. Plugging \Cref{clm:flowtreedimension,clm:flowtreeaspectratio} for $D$ and $\Phi_f$, the overall running time is $\tilde O(Nd\log(d\Phi_{\mathbf C}))$.

\subsection{Proof of Theorem~\ref{thm:stars}}
The algorithm is similar to BHOT-Tree from the previous section: it begins with the same sparse $\ell_1$ embedding step from \Cref{sec:sparsel1}, and then embeds the $\ell_1$-embedded batches into another quadtree $T_2$ with $\Phi_f$ levels. The difference is that instead of computing the optimal matching in the tree metric (like Flowtree), we do the following: in every node $v$ of $T_2$, we choose an arbitrary point $x_v$ in the hybercube associated with $v$, and draw an edge between $x_v$ and every other point in $v$ (thus adding a star graph on the points in $v$, cenetered at $x_v$). \cite{har2013euclidean} showed that repeating this $\tilde O(k^\rho)$ times yields a graph $G$ whose shortest path distance $G(\cdot,\cdot)$ satisfies the following with high probability: for every pair of batches $\alpha^s,\beta^t$,
\[ \norm{f(\alpha^s)-f(\beta^t)}_1 \leq G(\alpha^s,\beta^t) \leq O(\rho^{-1})\cdot\norm{f(\alpha^s)-f(\beta^t)}_1. \]
\begin{remark}
The statement in \cite{har2013euclidean} has a better approximation factor $O(\rho^{-1/2})$ instead of the $O(\rho^{-1})$ stated above.
The reason is that \cite{har2013euclidean} worked with Euclidean metrics, while here we work with an $\ell_1$ metric. The reason for the different approximation factor is that the best locality-sensitive hashing (LSH, \cite{indyk1009approximate}) dependence for Euclidean metrics is better than for $\ell_1$, namely, $\tilde O(n^{\rho})$ repetitions yield a $O(\rho^{-1/2})$-approximation in Euclidean metrics but an $O(\rho^{-1})$-approximation for $\ell_1$ metrics \citep{andoni2006near}. The reason we work here with an $\ell_1$ metric is that we can approximately embed the OT distance into $\ell_1$ (as done in \Cref{sec:sparsel1}) but not into $\ell_2$. Applying the LSH-based technique of \cite{har2013euclidean} to $\ell_1$ instead of $\ell_2$ yields the bound stated above.
\end{remark}

Consequently, by arguments similar to the ones in the previous section, the optimal matching computed on $G$ is an $O(\rho^{-1})$-approximation for the optimal matching for the embedded batches through $f$, which is in turn an $O(\log(d\Phi_{\mathbf C})\cdot\log N)$-approximation for BHOT. Overall, the stars algorithm returns a $O(\rho^{-1}\log(d\Phi_{\mathbf C})\cdot\log N)$-approximation for BHOT. 

\paragraph{Computational efficiency.}
As in the previous section, computing the $\ell_1$ embedding per batch takes time $O(nd\log(d\Phi_{\mathbf C}))$, and embedding them into the second quadtree $T_2$ takes time $O(D\log\Phi_f)$ per batch. 
The remaining running time is dominated by computing the true OT between every pair of batches connected with an edge in $G$. Since we add (up to) $k$ edges in each level of $T_2$, which has $\log\Phi)_f$ levels, and we repeat this $\tilde O(k^\rho)$ times, the running time of this part is $\tilde O(k^{1+\rho}\log\Phi_f)\cdot\mathrm{ot}(n,m)$.

\section{Details on Bounding Methods}\label{sec:further_methods}

\paragraph{Greedy matching with missing costs.} Given a budget of $B \geq k$, we first solve exactly all diagonal problems $\OT(\alpha^s, \beta^s), s\in \llbracket k \rrbracket$. Then, for as long as there is budget remaining, we find among the rows $\D_{s,:}$ and columns $\D_{:,t}$ with at least one missing entry, the pair $(s,t)$ with highest OT cost. We randomly sample a column $t'$ from among those with missing entries in $\D_{s,:}$, and solve $\OT(\alpha^s, \beta^{t'})$ to fill it. Budget permitting, we do the same for the entries of $\D_{:,t}$, solving $\OT(\alpha^{s'}, \beta^t)$ to fill it. We continue until the entire budget has been consumed. Then, we solve the meta-OT problem with missing costs as with \textsc{bhot}-Missing, described in Section~\ref{sec:tradeoffs}. 

\begin{algorithm}[t]
\caption{Proxy Distance OT Upper Bound}\label{algo:bhot_ub_proxy}
\KwIn{Data $\X \in \R^{N\times d}, \Y\in \R^{M\times d}$;  num.~batches $k$;  }
\tcc{Stage 1: approximating batch OT problems }
$n,m \gets \textup{GetBatchSize}(k,N,M)$\;
$\{ \X^1, \dots, \X^k \} \gets \text{Partition}(\X, n)$\;
$\{ \Y^1, \dots, \Y^k \} \gets \text{Partition}(\Y, m)$\; 
$\tilde{\mathbf{D}} \gets  \mathbf{0}_{k\times k}$ \;
\For{$(s,t) \in \llbracket k \rrbracket \times \llbracket k \rrbracket $}{
    $\mathbf{C}^{st} \gets \textup{PairwiseDistances}(\X^s, \Y^t)$\;
    $\tilde{\mathbf{D}}[s,t] \gets \textup{OTProxy}(\mathbf{C}^{st})$\;
}
$d, \Gamma \gets \OT(\tilde{\mathbf{D}}, \tfrac{1}{k}\ones_k, \tfrac{1}{k}\ones_k)$\;
\tcc{Stage 2: solving select batch problems}
$M \gets \textup{MatchingFromCoupling}(\Gamma)$  \tcp*{$|M|=k$}
$D \gets \emptyset$ \;
\For{$(s,t) \in M$}{
    $D \gets D \cup \OT(\mathbf{C}^{st}, \tfrac{1}{n}\ones_n, \tfrac{1}{m}\ones_m)$\;
}
\tcc{Stage 3: aggregating batch solutions}
$d \gets \textup{Mean}(D)$\;
\Return{d}
\end{algorithm}

\section{Economic Interpretation of Batch-Hierarchical Problem}
We revisit the bakery-caf\'e analogy by \citet{villani2008optimal} to provide an economic interpretation of the batch-hierarchical OT problem. Consider a large number of bakeries (located at $x_i$), producing loaves of bread, that need to be transported every morning to caf\'es (at locations $y_j$) around a city, say Paris. The baker-caf\'e consortium is tasked with finding a transference plan which determines the amount of bread to be transferred from each bakery to each caf\'e across the city, at minimal cost (e.g., some function of the distance traveled $c(x_i,y_j)$). This corresponds to the primal optimal transport problem. If, instead, the transportation of the loaves were to be outsourced to a transportation company, it would solve the dual OT problem. Indeed, the objective of this company would be to set their price schemes $\f,\g$ (pick-up and delivery prices per unit/location, respectively) so as to maximize their profit (total pick-up and delivery revenue), while staying competitive, i.e., not charging more that the cost of transportation incurred by the producers: $\f_i + \g_j \leq c(x_i, y_j)$. 

When the number of bakeries and caf\'es is too large (as is undoubtedly the case for Paris), the baker-caf\'e consortium might not be able to solve a problem of that scale. Thus, instead of relying on a single central planning authority, the consortium might organize into neighborhood sub-units, one for each \textit{arrondissement} in Paris. For each pair of neighborhoods, their corresponding baker and caf\'e consortiums could solve the transportation problem between their establishments (problems of the form \plaineqref{eq:primal_prob_batch}). Provided with a collection of these local transfer solutions, the central planning authority proceeds to rescale and aggregate them to ensure all supply and demand constraints are met
(Theorem~\ref{thm:blockwise_linear_primal}), while minimizing the total cost of the operation (i.e., solving problem \plaineqref{eq:primal_prob_meta}). Being limited to choose only from among aggregated re-scaled local solutions, the central authority will in general find a sub-optimal solution (corresponding to the \textsc{bhot} upper bound) compared to the (exact) solution that could be obtained by solving the entire problem at once. 

Suppose now that not all local problems can be solved, e.g., because the bakers in one neighborhood and the caf\'e owners in another one refuse to cooperate, or because labor laws limit how many hours they must devote to solving such problems. In that case, the central planning authority could, given a budget of $k$ sub-problems to solve, still come up with a global (albeit even less optimal) solution using only this $k$ local solutions, e.g., utilizing any of the budget-constrained methods discussed in Section~\ref{sec:ub}. The extreme version of this would be to pair neighborhoods at random, and only transport bread between these pairs according to their locally-optimal solutions (corresponding to the naive mini-batch averaging bound).

\section{Further Experimental Details and Results}\label{sec:exp_details}

\begin{figure}
     \centering
     \begin{subfigure}[b]{0.3\textwidth}
         \centering
         \includegraphics[width=\textwidth]{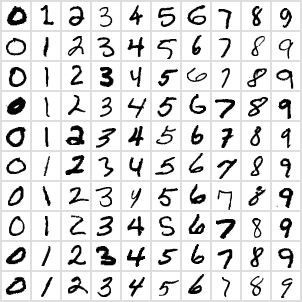}
         \caption{Rotation angle $\theta=0^\circ$}
         \label{fig:mnist_original}
     \end{subfigure}
     \hfill
     \begin{subfigure}[b]{0.3\textwidth}
         \centering
         \includegraphics[width=\textwidth]{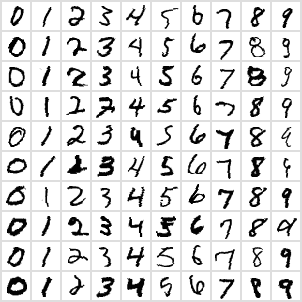}
         \caption{Rotation angle $\theta=4^\circ$}
         \label{fig:mnist_rot4}
     \end{subfigure}
     \hfill
     \begin{subfigure}[b]{0.3\textwidth}
         \centering
         \includegraphics[width=\textwidth]{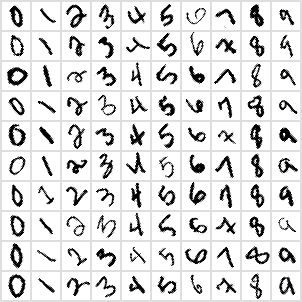}
         \caption{Rotation angle $\theta=45^\circ$}
         \label{fig:mnist_rot45}
     \end{subfigure}
        \caption{\textbf{MNIST Rotation datasets used for two-sample test experiments}. We take the original MNIST dataset (left) and apply rotations of increasing degree (e.g., $4^\circ, 45^\circ$ in center, right) to simulate distributional drifts, which we then try to detect using the various OT bounding methods proposed here as statistics within a two-sample permutation test analysis (Section~\ref{sec:exp_twosample_testing}). Some of our methods, including the best-in-class \textsc{bhot} bound but also the faster budget-constrained variants \textsc{bhot}-Missing and \textsc{bhot}-Tree, are able to reliably and significantly detect drifts with rotations as small as $\pm\theta=4^{\circ}$.
        }
    \label{fig:rotations}
\end{figure}

In Section~\ref{sec:exp_twosample_testing}, we evaluate our methods in a task of detecting distributions shifts on MNIST data. The data generation procedure is described in the main text (see also Figure~\ref{fig:rotations}). To turn our various OT estimates into statistical tests with significance quantification, we treat the problem as a two-sample test, whereby we use the distance statistic to decide whether to reject the null $H_0: \alpha = \beta$ in favor the the alternative $H_1: \alpha \neq \beta$. For this, we use the permutation tests, which have the advantage of being fully non-parametric and making no distributional assumption on the test statistic \citep{good2013permutation,ilmun2022minimax}. In practice, this involves repeatedly mixing and shuffling the datasets, comparing the distance between the original datasets and the random splits of these mixed datasets. For this, we use the function $\texttt{scipy.stats.permutation\_test}$ from the SciPy package \citep{2020SciPy-NMeth} with parameters $\texttt{alternative=`greater'}$ (i.e., a one-sided test) and $\texttt{n\_resamples=200}$ (the number of random mixing repetitions used to obtain p-values). This gives us a single scalar (the p-value for the alternative) for every [method, original dataset, rotated dataset] triplet. We repeat this entire process 5 times using different random slices of the two datasets to obtain confidence intervals and error bars in Figures~\ref{fig:alpha_power} and \ref{fig:power_angle}. Our implementation uses the \hyperlink{https://pythonot.github.io/}{POT} toolbox to solve OT problems, and will be made available upon acceptance.

%

\end{document}